\def\MakeStep#1{\par\noindent\texttt{#1}}
\def\MyOutline#1{\ifnum\MyOutlineCount>0 #1\fi}
\def\real{{\mathbb{R}}}
\def\optdec{_{{\normalfont\rm opt}}}
\def\findec{_{{\normalfont \rm end}}}
\def\errdec{_{{\normalfont \rm err}}}
\def\cmpdec{_{{\normalfont \rm cmp}}}
\def\pjdec{_{{\normalfont \rm pj}}}
\def\smdec{_{{\normalfont \rm sm}}}
\def\probdec{_{{\normalfont \rm prob}}}
\def\lebmeas#1.{{\mathcal L}^{\setbox0=\hbox{$#1\unskip$}\ifdim\wd0=0pt 1
    \else #1\fi}} 
\def\XXint#1#2#3{{\setbox0=\hbox{$#1{#2#3}{\int}$}
     \vcenter{\hbox{$#2#3$}}\kern-.5\wd0}}
\def\cvxs{{\mathcal{C}}}
\def\prcvxs{\pi_{\mathcal{C}}}
\def\sampdist{\mathcal{D}}
\def\filtration{\mathcal{F}}
\def\pcreatenrm#1#2{\expandafter\def\csname
  #1nrm\endcsname##1.{\left\|##1\right\|_{#2}} \expandafter\def\csname
  #1nrmname\endcsname{\left\|\,\cdot\,\right\|_{#2}}} 
\DeclareMathOperator\diam{diam}
\numberwithin{equation}{section} 
\theoremstyle{plain}
\newtheorem{lem}[equation]{Lemma}
\newtheorem{thm}[equation]{Theorem}
\theoremstyle{definition}
\newtheorem{defn}[equation]{Definition}
\theoremstyle{remark}
\newtheorem{rem}[equation]{Remark}
\begin{document}
\title[SGD-Optimality]{Optimality of the final model
  found via Stochastic Gradient Descent}
\author{Andrea Schioppa}
\address{Amsterdam, Noord Holland}
\email{ahisamuddatiirena+math@gmail.com}
\begin{abstract}
  We study convergence properties of Stochastic Gradient Descent (SGD)
  for convex objectives without assumptions on smoothness or strict
  convexity. We consider the question of establishing that with high
  probability the objective evaluated at the
  candidate minimizer returned by SGD is close to the minimal value of
  the objective. We compare this result concerning the final candidate
  minimzer (i.e.~the final model parameters learned after all gradient
  steps) to the online learning techniques of
  \cite{zinkevich-online-sgd} that take a rolling average of the model
  parameters at the different steps of SGD.
\end{abstract}
\maketitle
\tableofcontents
\section{Introduction}
\subsection{Motivation}
\par Stochastic Gradient Descent (SGD) is a popular approach to build
machine learning models by learning parameters that single out
an ``(approximately) optimal''  hypothesis in a given hypothesis
space. Main reasons for the popularity are the simplicity of the
algorithm and the ability to deal with real-life large
datasets. Moreover, SGD can be used also
to learn and optimize in real-time (e.g.~online learning) and the
gradient update rules can be refined (e.g.~using algorithms like
Adagrad, Adam or FTRL) to
improve convergence, especially in problems where the hypothesis space
is more complex either due to the large number of parameters
(e.g.~regressions with categorical features having high-cardinality)
or the complexity of the hypothesis (e.g.~neural networks).
\par In this note we only consider \emph{convex problems}, a case where
theoretical guarantees are well-understood \cite{bottou-sgd,
  nesterov-convex-programming, hazan-online-convex}.
Let us start with the classical mathematical setting of
minimizing a convex function $f:\cvxs\to\real$ where $\cvxs$ is a
convex compact subset of some Euclidean space $\real^N$. We want to
find a $u\in\cvxs$ that minimizes $f$ and for this Gradient Descent
(GD) uses steps in the direction of the subgradient $\partial f$ to
improve on an initial guess on the minimizer. A common hypothesis in this
case is that there is a uniform bound on the norm $\hilbnrm \partial
f.$ and a minor complication is to keep the constraint $u\in\cvxs$
during the minimization process.
\par In the typical supervised learning setting the situation is more
complex as the objective function $f$ is \emph{not directly known},
while one can sample objective functions
$f_t$ from a distribution $\sampdist$ with the guarantees $E[f_t]=f$
and $E[\partial f_t]=\partial f$\footnote{formally this involves
  saying that $E[\partial f_t]$ is \textbf{a} subgradient of $f$ when
  $f$ is not sufficiently smooth}. Concretely, one often has the case
that $f_t(u)=F(x_t,u)$ where $F$ is known but $x_t$ is sampled from a
distribution (e.g.~$x_t$ is the \emph{training example} consisting of
the predictive features and the target variable(s)). We will call the
variable $u$ parameters (e.g.~the parameters/weights of a linear
model). In this problem
there are two main complications:
\begin{enumerate}
\item gradient steps are in the direction of $\partial f$ only on
  average.
\item a probabilistic approach, e.g.~PAC-learning (see \cite{mohri-found},
  \cite[Ch.~9]{hazan-online-convex}) is needed to evaluate the goodness of the final
  hypothesis. In particular, the sequence of gradient updates
  generates a sequence of parameters $\{u_t\}_t$ which is no longer
  deterministic but a stochastic process.
\end{enumerate}
This work was motivated by the following question:
\begin{itemize}
\item[\textbf{Q1:}] Assume that we run SGD for $T$ iterations; how good of an approximate
  minimizer of $f$ is the  \emph{final parameter} $u_{T+1}$ returned by SGD?
  More precisely, can we
claim that if $T$ is sufficiently large then $f(u_{T+1})$ is close to the
minimal value of $f$ with high probability?
\end{itemize}
Despite the amount of literature on SGD we were not able to find an
answer to \textbf{Q1} that we found satisfactory\footnote{Overlooks
  here are to blame on me!}. We are aware of results  on the
expectation either of $E[f(u_{T+1})]$ or $E[u_{T+1}]$
(for example the recent~\cite{nguyen-sgd}) or results about $u_{T+1}$
under additional assumptions on the stochastic process generated by
SGD (for example \cite{zinkevich-parallelized}). In particular, in
\cite{zinkevich-parallelized} the authors are in a
sufficiently smooth and regularized setting so that the gradient updates
result in a contraction in the parameter space. Using Wasserstein
distances, they can then guarantee probabilitstic results on $u_{T+1}$.
\par To study \textbf{Q1} we consider two strategies. The first one
uses the connection between online learning and convex optimization of
\cite{zinkevich-online-sgd} and replaces $u_{T+1}$ by a running average of the parameter
weights. Our understanding is that the averaging reduces the
uncertainties in the gradient steps. However, this approach answers
 \textbf{Q1} \emph{only partially}. In the second approach we work directly with
$u_{T+1}$ but we need to overcome 3 technical issues:
\begin{itemize}
\item[\textbf{I1:}] the subgradients $\partial f_t$ are not uniformly
  Lipschitz, this breaks some arguments in gradient descent.
\item[\textbf{I2:}] martingales arguments in the PAC-framework do not work well when
  the projection onto the convex set $\cvxs$ is not linear.
\item[\textbf{I3:}] we need a slightly improvement of Hoeffding's
  concentration inequality~\cite{hoeffding-concentration}
  that uses one of Doob's maximal inequalities.
\end{itemize}
\subsection{Warm-up: the deterministic case}
\par As a warm-up case let us consider the case in which the sampling
distribution $\sampdist$ is concentrated at the objective $f$. This is
the case where the objective $f$ is known and available (\cite{nesterov-convex-programming}
for overview of results on convex optimization). We assume that the
set $\cvxs\subset\real^N$ is compact and convex and let $\pi_\cvxs:\real^N\to\cvxs$
denote the projection onto $\cvxs$ which is known to be
$1$-Lipschitz. Let $u\optdec\in\cvxs$ denote a point where $f$ attains
the minimum in $\cvxs$. The first strategy is to choose a sequence of learning
rates $\{\varepsilon_t\}_{t=1}^T$, take gradient steps, project back
to $\cvxs$ and then take the average of the parameters at different
steps (which lies in $\cvxs$ by convexity). We call this algorithm
\emph{Running Average Projected Gradient Descent}, abb.~RAPGD and
pseudocode~\ref{al:RAPGD}. The running average is because one can
do the computation of the final average by keeping running sums
($v,\rho$ in the pseudocode) over
the parameters and the learning rates.
\begin{algorithm}
  \SetKwInOut{Input}{input}\SetKwInOut{Output}{output}
   \Input{convex objective $f$, compact convex set $\cvxs$ with
     projection $\pi_\cvxs$, sequence of learning rates
     $\{\varepsilon_t\}_{t=1}^T$,
     initial point $u_1\in\cvxs$.}     
   \Output{approximate minimizer $u\findec$.}
   \Begin{
     $\rho\gets 0$\;
     $v\gets 0\in\real^N$\;
     \For{$t \gets 1$ \KwTo $T$}{ 
      $u_{t+1/2}\gets u_t -\varepsilon_t\partial f(u_t)$\;
      $u_{t+1}\gets\prcvxs(u_{t+1/2})$\;
      $\rho\gets\rho+\varepsilon_t$\;
      $v\gets v+\varepsilon_t u_t$\;
      }
      $u\findec\gets \frac{v}{\rho}$
    }
    \caption{Running Average Projected Gradient Descent}\label{al:RAPGD}
  \end{algorithm}
  RAPGD is analyzed in Theorem~\ref{thm:rapgd} using the analysis of
  \cite{zinkevich-online-sgd}. In particular, under the assumption that the norm of
  $\partial f$ is uniformly bounded on $\cvxs$ one shows that:
  \begin{equation}
    f(u\findec)-f(u\optdec)=O\left(\frac{\sum_{t=1}^T{\varepsilon_t^2}}
    {\sum_{t=1}^T{\varepsilon_t}}\right);
  \end{equation}
  taking for example $\varepsilon_t=\frac{1}{\sqrt{t}}$ one gets a
  bound:
  \begin{equation}
    f(u\findec)-f(u\optdec)=O\left(\frac{\log T}{\sqrt{T}}\right).
  \end{equation}
  \par The second strategy is to take gradient steps followed by
  projection. We call this algorithm \emph{Plain Projected Gradient
    Descent}, abb.~PPGD, pseudocode~\ref{al:ppgd}. 
  \begin{algorithm}
  \SetKwInOut{Input}{input}\SetKwInOut{Output}{output}
   \Input{convex objective $f$, compact convex set $\cvxs$ with
     projection $\pi_\cvxs$, sequence of learning rates
     $\{\varepsilon_t\}_{t=1}^T$,
     initial point $u_1\in\cvxs$.}     
   \Output{approximate minimizer $u\findec$.}
   \Begin{
     \For{$t\gets 1$ \KwTo $T$}{
       $u_{t+1/2}\gets u_t -\varepsilon_t\partial f(u_t)$\;
       $u_{t+1}\gets\prcvxs(u_{t+1/2})$\;
     }
     $u\findec\gets u_{T+1}$
     }
    \caption{Plain Projected Gradient Descent}\label{al:ppgd}
  \end{algorithm}
  The analysis of this approach is done in Theorem~\ref{thm:ppgd_main}
  and Remark~\ref{rem:ppgd_main}. Under the assumption of a uniform
  bound $L$ on the Lipschitz constant of $\partial f$ we obtain a
  bound:
  \begin{equation}
    f(u\findec)-f(u\optdec)=O\left(\frac{L}{\sqrt{T}}\right);
  \end{equation}
  as remarked in Remark~\ref{rem:ppgd_main} a more careful analysis
  following \cite[Corollary~2.1.2]{nesterov-convex-programming} would give
  \begin{equation}
    f(u\findec)-f(u\optdec)=O\left(\frac{L}{{T}}\right).
  \end{equation}
  Here the role of the Lipschitz hypothesis is to guarantee that the
  objective goes down after every iteration, not just on average after
  some iterations. Moreover, the nonlinearity of the projection
  $\pi_\cvxs$ introduces some complications. In particular, we
  introduce a notion of local norm (Definition~\ref{defn:local_norm})
  for vectors with respect to the convex set $\cvxs$; in this setting,
  even though $\partial f$ does not vanish at a minimizer
  $u\optdec\in\cvxs$, one is guaranteed that $\CHilbnrm\partial
  f(u\optdec).=0$.
  \subsection{The stochastic setting} In the general case we do not
  have direct access to $f$ but we base our gradient steps on sampling
  from a distribution $\sampdist$ of convex functions. In \emph{Running
    Average Projected Stochastic Gradient Descent}, abb.~RAPSGD and
  pseudocode~\ref{al:raspgd} we proceed similarly to RAPGD. At each
  step we take a sample from $\sampdist$ independent of what happens
  at the previous steps, update the gradient and project back to $\cvxs$.
 \begin{algorithm}
  \SetKwInOut{Input}{input}\SetKwInOut{Output}{output}
   \Input{convex objective $f$, compact convex set $\cvxs$ with
     projection $\pi_\cvxs$, sequence of learning rates
     $\{\varepsilon_t\}_{t=1}^T$,
     initial point $u_1\in\cvxs$, distribution of convex functions
     $\sampdist$ whose average is $f$.}
   \Output{approximate minimizer $u\findec$.}
   \tcc{Note that the sequence of iterations
   gives rise to a filtration $\{\filtration_t\}_t$.}     
   \Begin{
     $\rho\gets 0$\;
     $v\gets 0\in\real^N$\;
     \For{$t \gets 1$ \KwTo $T$}{ 
      Sample $f_{t+1/2}$ from $\sampdist$ independently of $\filtration_t$.\;
      $u_{t+1/2}\gets u_t -\varepsilon_t\partial f_{t+1/2}(u_t)$\;
      $u_{t+1}\gets\prcvxs(u_{t+1/2})$\;
      $\rho\gets\rho+\varepsilon_t$\;
      $v\gets v+\varepsilon_tu_t$\;
      }
      $u\findec\gets \frac{v}{\rho}$
  }
   \caption{Running Average Stochastic Projected Gradient Descent}
   \label{al:raspgd}
 \end{algorithm}
 The theoretical guarantee that one proves in Theorem~\ref{thm:raspgd}
 is that, for an appropriate choice of the learning rates,
 with a number of steps $T=O(\varepsilon^{-2})$ with
 probability $1-O(\varepsilon^2)$ one has:
 \begin{equation}
   f(u\findec)-f(u\optdec)\lessapprox 2\varepsilon\log\frac{1}{\varepsilon}.
 \end{equation}
 Note that the notation $\lessapprox$ is an asymptotic notation. For
 an error parameter $\varepsilon$ writing $a(\varepsilon)\lessapprox
 b(\varepsilon)$ means that
 $\lim_{\varepsilon\searrow0}\frac{a(\varepsilon)}{b(\varepsilon)}\le
 1$, while for the number of steps parameters $T$ writing
 $a(T)\lessapprox b(T)$ means that
 $\lim_{T\nearrow\infty}\frac{a(T)}{b(T)}\le 1$. The proof of
 Theorem~\ref{thm:raspgd} relies on that of Theorem~\ref{thm:rapgd}
 and the probability bound is obtained via Hoeffding's inequality. As
 we obtain $u\findec$ as an average, the nonlinearity of $\pi_\cvxs$ is
 dealt with by using Jensen's inequality.
 \par The second approach allows to get a theoretical guarantee for
 the final parameters returned by the algorithm. We call this
 algorithm \emph{Smoothed Stochastic Gradient Descent}, abb.~SSGD and
 pseudocode~\ref{al:ssgd}. 
 \begin{algorithm}
  \SetKwInOut{Input}{input}\SetKwInOut{Output}{output}
   \Input{convex objective $f$, compact convex set $\cvxs$ with
     a penalization function $\psi_{\cvxs}$ and the projection
     $\pi_\cvxs$,
     sequence of learning rates
     $\{\varepsilon_t\}_{t=1}^T$,
     initial point $u_1\in\cvxs$, distribution of convex functions
     $\sampdist$ whose average is $f$ and whose subgradients are bounded
     in norm by $G$ (i.e.~for $\hat{f}\in\sampdist$ on has
     $\hilbnrm\partial \hat{f}.\le G$), the uniform distribution
     $\eta_{\varepsilon\smdec}$ on the ball of radius $\varepsilon\smdec$.}
   \Output{approximate minimizer $u\findec$.}
   \tcc{Note that the sequence of iterations
     gives rise to a filtration $\{\filtration_t\}_t$.}
   \Begin{
 \For{$t\gets 1$ \KwTo $T$}{
   $f_{t+1/2} \gets \sampdist$ independently of $\filtration_t$\;
   $v\gets\eta_{\varepsilon\smdec}$ independently of $\filtration_{t+1/2}$\;
   $u_{t+1}\gets u_t -\varepsilon_t[\partial f_{t+1/2}(u_t-v)
    +2G\partial\psi_{\cvxs}(u_t-v)]$\;
}
    $u\findec\gets \pi_{\cvxs}({u_{T+1}})$
}
\caption{Smoothed Stochastic Gradient Descent}\label{al:ssgd}
\end{algorithm}
SSGD differs from RASPGD in two respects:
\begin{enumerate}
\item the constraint $u_t\in\cvxs$ is not strongly enforced at
  each step, but only at the final step taking a projection (line
  7). At the general step one uses a penalization function
  $\psi_\cvxs$ introduced in Lemma~\ref{lem:gauge}. This idea is not
  too different from that of relaxing a constraint by adding a
  penalization to the objective function via 
  Lagrange multipliers.
\item a perturbation term (see \cite[Sec.~2.3.2, Algorithm~4]{hazan-online-convex}) is sampled (line 4) to smooth
  out the gradient updates (line 5).
\end{enumerate}
\par In Theorem~\ref{thm:anassgd} we prove that, for a particular choice
of the learning rates, with a number of steps $T=O(\varepsilon^{-6})$
with probability $1-O(\varepsilon)$ one has that:
\begin{equation}
  f(u\findec)-f(u\optdec)\lessapprox 128G^2\varepsilon\log\frac{1}{\varepsilon}.
\end{equation}
On the other hand, if $\partial f$ was known to be Lipschitz, one
would require a number of steps $T=O(\varepsilon^{-4})$. The ability
to bound the properties of the final parameter $u_{T+1}$ (note that in
Theorem~\ref{thm:anassgd} we show that the distance bewteen $u_{T+1}$
and $u\findec=\pi_{\cvxs}(u_{T+1})$ is $O(\varepsilon)$ so one can use
$u_{T+1}$ and $u\findec$ interchangeably even though only the latter
is guaranteed to lie in $\cvxs$) come at the
cost of a slower convergence (though we conjecture that with an
adaptative selection of the learning rate depending on the size of the
current gradient one might improve to
$O(\varepsilon^{-2}\log\frac{1}{\varepsilon})$).  SSGD handles the issues
we mentioned above as follows:
\begin{itemize}
\item[\textbf{I1:}] the smoothing (lines 4 and 5) slightly perturbs
  the objective $f$ to one which has Lipschitz subgradient. This idea
  is essentially a use of mollifications in real analysis.
\item[\textbf{I2:}] the penalization $\psi_\cvxs$ allows to avoid
  taking projections; thus we can use linearity in taking
  expectations.
\item[\textbf{I3:}] in deriving~\eqref{eq:anassgd10} we use a stronger
  version of Hoeffding's concentration inequality; as observed by
  Hoeffding in \cite[equation~2.17]{hoeffding-concentration}
  this comes almost for free combining his proof
  with one of Doob's maximal inequalities for martingales.
\end{itemize}
\subsection{Summary} We address the question concerning wether the
final parameters $u_{T+1}$ returned by SGD is minimizing the convex
objective $f$ up to a small error:
\begin{itemize}
\item In Theorem~\ref{thm:anassgd} we show that $f(u_{T+1})$ is within
  distance $O(\varepsilon\log1/\varepsilon)$ from the minimum with
  probability $1-O(\varepsilon)$ if the number of gradient descent
  steps $T$ is $O(\varepsilon^{-6})$.
\item In Theorem~\ref{thm:raspgd} we show that the same holds if
  instead of $u_{T+1}$ one considers a rolling average of the
  parameters with $T$ now $O(\varepsilon^{-2})$.
\end{itemize}
The paper is organized in two section. In the first one we deal with
the deterministic case in which each gradient step works directly with
$f$. This is mainly for illustrative purposes. In the second section
we deal with the case in which at each step we sample from a
distribution of convex functions whose mean is $f$.
\section{The determistic case}
\par In this section we analyze RAPGD and PPGD. In order to analyze
PPGD we need to introduce a notion of local norm and prove a
geometric result, Theorem~\ref{thm:localgeoprop}.
\subsection{Analysis of RAPGD} Following \cite{zinkevich-online-sgd} we prove:
\begin{thm}[Analysis of RAPGD]
  \label{thm:rapgd}
  If $u\optdec\in\cvxs$ is a minimizer of $f$ in $\cvxs$ and if
  $\sup_{u\in\cvxs}\|\partial f(u)\|<\infty$,
  letting
  \begin{equation}
    \Clinfnrm\partial f.=\sup_{u\in\cvxs}\|\partial f(u)\|,
  \end{equation}
  then
  \begin{equation}
    f(u\findec)-f(u\optdec)\le\frac{\hilbnrm u_1-u\optdec.^2+
      \Clinfnrm\partial f.^2
      \sum_{t=1}^T\varepsilon_t^2}{2\sum_{t=1}^T\varepsilon_t}.
  \end{equation}
\end{thm}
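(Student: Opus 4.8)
The plan is to run the standard telescoping argument of \cite{zinkevich-online-sgd} for projected gradient descent. First I would use convexity of $f$ to pass from function values to inner products: for each $t$ one has $f(u_t)-f(u\optdec)\le\langle\partial f(u_t),u_t-u\optdec\rangle$, so it suffices to control the weighted sum $\sum_{t=1}^T\varepsilon_t\langle\partial f(u_t),u_t-u\optdec\rangle$.

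The key step is to track how the squared distance to the minimizer evolves across one iteration. Since $u\optdec\in\cvxs$ and $\prcvxs$ is $1$-Lipschitz with $\prcvxs(u\optdec)=u\optdec$, the projection can only decrease the distance, giving $\hilbnrm u_{t+1}-u\optdec.^2\le\hilbnrm u_{t+1/2}-u\optdec.^2$. Expanding the right-hand side via $u_{t+1/2}=u_t-\varepsilon_t\partial f(u_t)$ and rearranging isolates the inner product as a distance decrement plus a gradient term:
\[\varepsilon_t\langle\partial f(u_t),u_t-u\optdec\rangle\le\tfrac12\bigl(\hilbnrm u_t-u\optdec.^2-\hilbnrm u_{t+1}-u\optdec.^2\bigr)+\tfrac{\varepsilon_t^2}{2}\hilbnrm\partial f(u_t).^2.\]

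Summing over $t=1,\dots,T$ makes the distance terms telescope, leaving $\tfrac12\hilbnrm u_1-u\optdec.^2$ once the nonnegative tail $\hilbnrm u_{T+1}-u\optdec.^2$ is discarded, while the gradient terms are bounded uniformly by $\Clinfnrm\partial f.^2\sum_{t=1}^T\varepsilon_t^2$. Finally I would invoke Jensen's inequality: since $u\findec$ is the $\varepsilon_t$-weighted average of the points $u_t$, all of which lie in $\cvxs$, convexity of $f$ yields $f(u\findec)\le\bigl(\sum_{t=1}^T\varepsilon_t\bigr)^{-1}\sum_{t=1}^T\varepsilon_t f(u_t)$; dividing the summed inequality by $\sum_{t=1}^T\varepsilon_t$ then gives the claimed bound. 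No step here is genuinely hard; the only point requiring care is the direction of the projection inequality, which relies on $u\optdec$ being a fixed point of $\prcvxs$, and the interchange of the average with $f$ via Jensen at the end.
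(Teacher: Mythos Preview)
Your proposal is correct and follows essentially the same argument as the paper: convexity to pass to inner products, the projection contraction $\hilbnrm u_{t+1}-u\optdec.\le\hilbnrm u_{t+1/2}-u\optdec.$, expansion of the squared distance to isolate the inner product, telescoping, and Jensen's inequality applied to the weighted average $u\findec$. The only cosmetic difference is that the paper writes the inner product as $\langle u_t-u_{t+1/2},u_t-u\optdec\rangle$ before expanding, whereas you expand $\hilbnrm u_{t+1/2}-u\optdec.^2$ directly; the resulting per-step inequality and the rest of the proof are identical.
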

\begin{proof}
  Using convexity of $f$ and the definition of the subgradient we
  obtain
  \begin{equation}
    \label{eq:rapgd0}
    \begin{split}
    \varepsilon_t (f(u_t)-f(u\optdec)) &\le\varepsilon_t\langle
    \partial f(u_t), u_t-u\optdec\rangle \\
    &=\langle u_t-u_{t+1/2},u_t-u\optdec\rangle\\
    &=\langle (u_t-u\optdec)-(u_{t+1/2}-u\optdec), u_t-u\optdec\rangle.
  \end{split}
\end{equation}
Using that projection onto $\cvxs$ is $1$-Lipschitz and expanding the
Hilbert norm into products we get:
\begin{equation}
  \label{eq:rapgd1}
  \begin{split}
    \hilbnrm u_{t+1} - u\optdec.^2 &\le\hilbnrm u_{t+1/2}-u\optdec.^2
    =\hilbnrm u_{t+1/2}-u_t+u_t-u\optdec.^2\\
    &=\varepsilon_t^2\hilbnrm\partial f(u_t).^2 +
    \hilbnrm u_t-u\optdec.^2 + 2\langle u_{t+1/2}-u_t, u_t-u\optdec\rangle.
  \end{split}
\end{equation}
Substituting~(\ref{eq:rapgd1}) into~(\ref{eq:rapgd0}) we get:
\begin{equation}
  \label{eq:rapgd2}
  \varepsilon_t (f(u_t)-f(u\optdec)) \le
  \frac{\hilbnrm u_t-u\optdec.^2 - \hilbnrm u_{t+1}-u\optdec.^2
    +\varepsilon_t^2\hilbnrm\partial f(u_t).^2}{2};
\end{equation}
summing in $t$ we get:
\begin{equation}
  \label{eq:rapgd3}
  \sum_{t=1}^T   \varepsilon_t (f(u_t)-f(u\optdec)) \le
  \frac{\hilbnrm u_1-u\optdec.^2+\sum_{t=1}^T\varepsilon_t^2
          \Clinfnrm\partial f.^2}{2},
      \end{equation}
      and application of Jensen's inequality finally yields
      \begin{equation}
        \label{eq:rapgd4}
        \sum_{t=1}^T\varepsilon_t (f(u\findec)-f(u\optdec))
        \le  \frac{\hilbnrm u_1-u\optdec.^2+\sum_{t=1}^T\varepsilon_t^2
          \Clinfnrm\partial f.^2}{2}.
      \end{equation}
    \end{proof}
\subsection{Preliminary analysis of PPGD} A preliminary analysis of
PPGD is based on the following Lemma which analyzes the effect of a
single gradient step. Note that the term $-\varepsilon_t L$
in~\eqref{eq:ppgdlem0} might be improved to $-\varepsilon L/2$ (compare \cite[Lemma~1.2.3]{nesterov-convex-programming}).
\begin{lem}\label{lem:ppgdlem}
  Let $\partial f$ be $L$-Lipschitz; then in plain projected
  gradient descent one has
  \begin{equation}\label{eq:ppgdlem0}
    f(u_{t+1})-f(u_t) \le -(1-\varepsilon_t L)\langle\partial f(u_t),u_t-u_{t+1}\rangle.
  \end{equation}
  Moreover,
  \begin{equation}\label{eq:ppgdlem0bis}
    \langle\partial f(u_t),u_t-u_{t+1}\rangle\ge0,
  \end{equation}
  and thus if $\varepsilon_t\le\frac{1}{L}$ then
  $\{f(u_t)\}_{t=1}^{T+1}$ is non-increasing.
\end{lem}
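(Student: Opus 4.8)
The plan is to analyze a single iteration, writing $d_t = u_t - u_{t+1}$ for the net displacement and extracting everything from the variational characterization of the projection $\prcvxs$. First I would record the inequality that drives the whole argument: since $u_{t+1} = \prcvxs(u_{t+1/2})$ and $u_t \in \cvxs$, the obtuse-angle characterization of projection onto a convex set gives $\langle u_{t+1/2} - u_{t+1}, u_t - u_{t+1}\rangle \le 0$. Substituting $u_{t+1/2} = u_t - \varepsilon_t \partial f(u_t)$ turns the first factor into $d_t - \varepsilon_t \partial f(u_t)$, so the inequality collapses to $\hilbnrm d_t.^2 \le \varepsilon_t \langle \partial f(u_t), d_t\rangle$. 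In particular its right-hand side is nonnegative, which is exactly \eqref{eq:ppgdlem0bis}; moreover it controls $\hilbnrm d_t.^2$ by $\varepsilon_t$ times the directional quantity I will want to isolate.

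Next I would produce the one-step decrease. Using convexity of $f$ at $u_{t+1}$ one gets $f(u_{t+1}) - f(u_t) \le -\langle \partial f(u_{t+1}), d_t\rangle$; splitting $\partial f(u_{t+1}) = \partial f(u_t) + \bigl(\partial f(u_{t+1}) - \partial f(u_t)\bigr)$ and estimating the second piece by Cauchy--Schwarz together with the $L$-Lipschitz hypothesis, $\hilbnrm \partial f(u_{t+1}) - \partial f(u_t). \le L \hilbnrm d_t.$, yields $f(u_{t+1}) - f(u_t) \le -\langle \partial f(u_t), d_t\rangle + L\hilbnrm d_t.^2$. Feeding the projection bound $\hilbnrm d_t.^2 \le \varepsilon_t \langle \partial f(u_t), d_t\rangle$ into the error term rewrites this as $-(1 - \varepsilon_t L)\langle \partial f(u_t), d_t\rangle$, which is \eqref{eq:ppgdlem0}.

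The monotonicity claim is then immediate: when $\varepsilon_t \le 1/L$ the factor $1 - \varepsilon_t L$ is nonnegative, and since \eqref{eq:ppgdlem0bis} makes $\langle \partial f(u_t), d_t\rangle \ge 0$, the right-hand side of \eqref{eq:ppgdlem0} is $\le 0$, so $f(u_{t+1}) \le f(u_t)$ for every $t$ and $\{f(u_t)\}_{t=1}^{T+1}$ is non-increasing.

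I expect the only genuinely delicate point to be the bookkeeping in the projection inequality: one must feed the correct point ($u_t$, which lies in $\cvxs$) into the variational inequality, and the substitution of $u_{t+1/2}$ must produce the clean coefficient $\varepsilon_t$ rather than an $\varepsilon_t^2$. Everything downstream is a mechanical combination of convexity, Cauchy--Schwarz and the Lipschitz bound. I would also note that the sharper constant $-\varepsilon_t L/2$ alluded to in the statement comes from replacing the convexity-plus-Cauchy--Schwarz step with the descent lemma $f(u_{t+1}) \le f(u_t) + \langle \partial f(u_t), u_{t+1} - u_t\rangle + \tfrac{L}{2}\hilbnrm u_{t+1} - u_t.^2$ of \cite[Lemma~1.2.3]{nesterov-convex-programming}, which halves the error coefficient before the same projection bound is applied.
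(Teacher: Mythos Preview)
Your argument is essentially the same as the paper's: both use convexity at $u_{t+1}$, split off $\partial f(u_{t+1})-\partial f(u_t)$ and bound it via the Lipschitz hypothesis, and then control $\hilbnrm u_{t+1}-u_t.^2$ by $\varepsilon_t\langle\partial f(u_t),u_t-u_{t+1}\rangle$ through the variational inequality for $\prcvxs$. The only notable difference is that you read off \eqref{eq:ppgdlem0bis} directly from $\hilbnrm d_t.^2\ge 0$ in the projection bound, whereas the paper proves it by a separate contradiction argument with an orthogonal decomposition of $u_{t+1}-u_t$; your route is slightly cleaner but the content is the same.
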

\begin{proof}
  Applying first convexity of $f$ and the definition of the
  subgradient $\partial f$ and the Lipschitz condition on $\partial f$
  we obtain:
  \begin{equation}    \label{eq:ppgdlem1}
    \begin{split}
      f(u_{t+1})-f(u_t) & \le\langle \partial f(u_{t+1}),
      u_{t+1}-u_t\rangle \\
      &=\langle \partial f(u_{t+1})-\partial f(u_t), u_{t+1}-u_t
      \rangle + \langle\partial f (u_t),u_{t+1}-u_t\rangle \\
      &\le L\hilbnrm u_{t+1}-u_t.^2+\langle\partial f
      (u_t),u_{t+1}-u_t\rangle.      
    \end{split}
  \end{equation}
  Rewriting $\hilbnrm u_{t+1}-u_t.^2$ as
  \begin{equation}
    \hilbnrm u_{t+1}-u_t.^2 = \langle
    u_{t+1}-u_t,u_{t+1}-u_{t+1/2}\rangle
    + \langle u_{t+1}-u_t,u_{t+1/2}-u_t\rangle,
  \end{equation}
  and recalling that  $\langle
  u_{t+1}-u_t,u_{t+1}-u_{t+1/2}\rangle\le0$ as
  $u_{t+1}$ is the point of $\cvxs$ closest to $u_{t+1/2}$ and $u_t\in\cvxs$,
  we obtain
  \begin{equation}
    \label{eq:ppgdlem2}
    \hilbnrm u_{t+1}-u_t.^2\le\langle
    u_{t+1}-u_t,-\varepsilon_t\partial f(u_t)\rangle;
  \end{equation}
  substitution into \eqref{eq:ppgdlem1} yields \eqref{eq:ppgdlem0}.
  Finally, for \eqref{eq:ppgdlem0bis} we assume that $\partial
  f(u_t)\ne0$; then from the decomposition
  \begin{equation}
    u_{t+1} = u_t + R\partial f(u_t) + v^\perp,
  \end{equation}
  where $v^\perp$ is orthogonal to $\partial f(u_t)$, we
  observe that if \eqref{eq:ppgdlem0bis} did not held, $R$ would be
  less than $0$ and hence $u_t$ closer to $u_{t+1/2}$ than $u_{t+1}$,
  contradicting that $u_{t+1}$ is the point of $\cvxs$ closest to $u_{t+1/2}$.
\end{proof}
\subsection{Geometric results} As we use the non-linear projection
$\pi_\cvxs$ onto $\cvxs$ we introduction a notion of local norm to
measure, having fixed a vector $u$ and a $p\in\cvxs$, how big is the
effective norm of $u$ when we consider only steps that start from $p$
and hit the sphere centered at $p$ of radius $r$ in some point of
$\cvxs$. We then prove a geometric result that we need when we want to
relate gradient steps to the local norm. As in the stochastic case we
use a penalization function $\psi_\cvxs$ to weakly enforce the
constraint $u_t\in\cvxs$, we might have skipped a discussion of local
norms, but we think it is useful to get an idea of the complications
that can arise because of the nonlinearity of projections.
\begin{defn}[Local norm]\label{defn:local_norm}
  Given $p\in\cvxs$, a vector $u$ and
  $r>0$, we define the \textbf{local norm of $u$ at $p$ at
    scale $r$ relatively to $\cvxs$} as:
  \begin{equation}
    \CHilbnrm u.(p;r) = \begin{cases}
      
      \sup_{v\in\cvxs, \hilbnrm v-p.= r}\max\left(\frac{\langle u,
          v-p\rangle}{r},0\right)
      &\text{if there is a $v\in\cvxs$: $\hilbnrm v-p.=r$} \\
      0&\text{otherwise}.
  \end{cases}
\end{equation}
\end{defn}
\begin{lem}
  The map $r\mapsto\CHilbnrm u.(p;r)$ is non-increasing and
  $\limsup_{r\searrow0}\CHilbnrm u.(p;r)\le\hilbnrm u.$.
\end{lem}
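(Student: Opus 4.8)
The plan is to treat the two assertions separately, as each reduces to an elementary geometric observation exploiting the convexity of $\cvxs$ together with the fact that $p\in\cvxs$. The monotonicity will follow from a radial rescaling argument, while the $\limsup$ bound will follow at once from Cauchy--Schwarz.

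For the monotonicity I would fix $0<r_1<r_2$ and first suppose that the sphere of radius $r_2$ about $p$ meets $\cvxs$, so that $\CHilbnrm u.(p;r_2)$ is given by the supremum formula. Given any competitor $v_2\in\cvxs$ with $\hilbnrm v_2-p.=r_2$, I would set $v_1=p+\tfrac{r_1}{r_2}(v_2-p)=(1-\tfrac{r_1}{r_2})p+\tfrac{r_1}{r_2}v_2$. Since $0<r_1/r_2<1$ and both $p,v_2\in\cvxs$, convexity of $\cvxs$ gives $v_1\in\cvxs$, and clearly $\hilbnrm v_1-p.=r_1$. The key identity is that the normalized inner product is unchanged, namely $\tfrac{\langle u,v_1-p\rangle}{r_1}=\tfrac{\langle u,v_2-p\rangle}{r_2}$, since $\langle u,v_1-p\rangle=\tfrac{r_1}{r_2}\langle u,v_2-p\rangle$. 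Hence every quantity entering the supremum at scale $r_2$ is also attained at scale $r_1$, which yields $\CHilbnrm u.(p;r_2)\le\CHilbnrm u.(p;r_1)$; in particular this shows that the sphere of radius $r_1$ also meets $\cvxs$, so the case distinction in the definition behaves consistently. If instead the sphere of radius $r_2$ does not meet $\cvxs$, then $\CHilbnrm u.(p;r_2)=0$, and since the local norm is nonnegative by construction the inequality is immediate.

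For the $\limsup$ bound I would in fact establish the stronger uniform estimate $\CHilbnrm u.(p;r)\le\hilbnrm u.$ for every $r>0$. When the relevant sphere is nonempty, each $v\in\cvxs$ with $\hilbnrm v-p.=r$ satisfies, by Cauchy--Schwarz, $\tfrac{\langle u,v-p\rangle}{r}\le\tfrac{\hilbnrm u.\,\hilbnrm v-p.}{r}=\hilbnrm u.$; taking the maximum with $0$ and the supremum over such $v$ preserves this bound, and when the sphere is empty the local norm is $0\le\hilbnrm u.$. Letting $r\searrow0$ then gives the claimed $\limsup$ inequality as a trivial consequence.

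The computations here are entirely routine, so there is no genuine obstacle; the only points requiring care are bookkeeping ones. First, one must verify that the case distinction in the definition is compatible with monotonicity, namely that if $\cvxs$ meets the sphere of radius $r_2$ then it meets every smaller sphere, which is exactly what the radial rescaling supplies. Second, one should note that both the $\max(\,\cdot\,,0)$ truncation and the supremum are monotone operations, so transferring a bound from individual competitors to the local norm is legitimate in each part of the argument.
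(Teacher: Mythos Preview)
Your proof is correct and follows essentially the same approach as the paper's: radial rescaling of competitors via convexity of $\cvxs$ (with $p\in\cvxs$) for monotonicity, and Cauchy--Schwarz for the uniform bound $\CHilbnrm u.(p;r)\le\hilbnrm u.$. Your version is, if anything, slightly tidier in that you explicitly handle the case where the sphere of radius $r_2$ does not meet $\cvxs$, whereas the paper leaves this implicit.
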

\begin{proof}
  Fix $\varepsilon>0$ and $r_1\ge r_0>0$. Choose $v_1\in\cvxs$ with
  $\hilbnrm v_1-p.=r_1$ such that:
  \begin{equation}
    \max\left(
      \frac{\langle u, v_1-p\rangle}{r_1}
      , 0
      \right)\ge\CHilbnrm u.(p;r_1)-\varepsilon;
    \end{equation}
    writing $v_1=p+w_1$ we have $\hilbnrm w_1.=r_1$; as $p\in\cvxs$
    and $\cvxs$ is convex, the point $v_0 = p + r_0w_1/r_1$ lies also
    in $\cvxs$ and we have:
    \begin{equation}
       \max\left(
      \frac{\langle u, v_0-p\rangle}{r_0}
      , 0
      \right)=       \max\left(
      \frac{\langle u, v_1-p\rangle}{r_1}
      , 0
      \right);
    \end{equation}
    we thus conclude that
    \begin{equation}
      \CHilbnrm u.(p;r_0)\ge \CHilbnrm u.(p;r_1).
    \end{equation}
    Note however, that for any $r>0$ we have
    \begin{equation}
       \max\left(
      \frac{\langle u, v-p\rangle}{r}
      , 0
      \right)\le\hilbnrm u.
    \end{equation}
    whenever $\hilbnrm v-p.=r$; thus $\limsup_{r\searrow0}\CHilbnrm u.(p;r)\le\hilbnrm u.$.
\end{proof}
\begin{thm}\label{thm:localgeoprop}
  Whenever $u$ is a unit-norm vector, i.e.~$\hilbnrm u.=1$,
  for the local norm we have the fundamental inequality linking it to
  projections, where the implied constants are universal (also in the
  underlying $\real^N$'s dimension):
  \begin{equation}\label{geom:ineq}
      \langle u, \prcvxs(p+ru)-p\rangle
     \ge \frac{r}{2}\CHilbnrm u.^2(p;r).
  \end{equation}
\end{thm}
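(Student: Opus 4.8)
The plan is to reduce the statement to the elementary principle that projecting onto a larger convex set can only decrease the distance to the target point, by comparing $\prcvxs(p+ru)$ against a cleverly chosen competitor inside $\cvxs$. Throughout I would write $q=\prcvxs(p+ru)$ and $e=q-p$, and abbreviate $c=\CHilbnrm u.(p;r)$, noting that $p+ru$ sits at distance exactly $r$ from $p$ since $\hilbnrm u.=1$.

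First I would record the variational characterization of the projection, namely $\langle (p+ru)-q,\,y-q\rangle\le0$ for every $y\in\cvxs$. Taking $y=p\in\cvxs$ and expanding gives $\hilbnrm e.^2\le r\langle u,e\rangle$, so in particular $\langle u,\prcvxs(p+ru)-p\rangle\ge0$. This already settles the inequality in the degenerate case $c=0$, which in particular covers the situation where no point of $\cvxs$ lies at distance exactly $r$ from $p$.

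Assume now $c>0$. Since $\{v\in\cvxs:\hilbnrm v-p.=r\}$ is compact and the relevant functional is continuous, the supremum defining $c$ is attained at some $v^*\in\cvxs$ with $\hilbnrm v^*-p.=r$ and $\langle u,v^*-p\rangle=rc$; because $\hilbnrm u.=1$ we have $c\le1$. The key construction is the competitor $q'=(1-c)p+cv^*$, which lies in $\cvxs$ by convexity (it is the foot of the perpendicular dropped from $p+ru$ onto the segment $[p,v^*]$), and for which a one-line expansion gives $\hilbnrm (p+ru)-q'.^2=r^2(1-c^2)$.

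The heart of the argument is then the comparison $\hilbnrm (p+ru)-q.\le\hilbnrm (p+ru)-q'.$, valid because $q'\in\cvxs$ while $q$ is the nearest point of $\cvxs$ to $p+ru$. Expanding the left-hand square as $r^2-2r\langle u,e\rangle+\hilbnrm e.^2$ and comparing with $r^2(1-c^2)$ yields $2r\langle u,e\rangle\ge\hilbnrm e.^2+r^2c^2\ge r^2c^2$, which is precisely $\langle u,\prcvxs(p+ru)-p\rangle\ge\frac{r}{2}\CHilbnrm u.^2(p;r)$. I do not anticipate a genuine obstacle: the only delicate point is identifying the correct competitor $q'$, which must simultaneously lie in $\cvxs$ (ensured by convexity together with $c\le1$) and be close to $p+ru$; choosing the extremal point $v^*$ realizing the local norm and placing $q'$ on the segment $[p,v^*]$ achieves both, after which the distance identity $\hilbnrm (p+ru)-q'.^2=r^2(1-c^2)$ and the nearest-point inequality finish the proof. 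Should one wish to avoid appealing to attainment of the supremum, running the same estimate with a near-optimal $v^*$ and letting the approximation error tend to $0$ delivers the same bound.
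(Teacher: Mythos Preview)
Your argument is correct and considerably cleaner than the paper's. The paper proceeds in three steps: first it establishes nonnegativity of both sides and the weak angular bound $\measuredangle(u,0,v\cmpdec)\ge\alpha$ where $\alpha$ is the angle between $u$ and the projection; then it performs a planar reduction, replacing $v\cmpdec$ by its orthogonal shadow onto the plane spanned by $u$ and $\prcvxs(ru)$; finally it appeals to the law of sines in two triangles and a somewhat delicate trigonometric inequality to compare the two inner products. By contrast, you bypass all the trigonometry by identifying a single competitor $q'=(1-c)p+cv^*$ on the segment $[p,v^*]$, computing $\hilbnrm (p+ru)-q'.^2=r^2(1-c^2)$ exactly, and invoking only the nearest-point property $\hilbnrm (p+ru)-q.\le\hilbnrm (p+ru)-q'.$. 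Your route is dimension-free from the start and never leaves the level of squared distances, whereas the paper's planar reduction and angle chase obscure what is ultimately the same variational principle; the paper's approach does, however, make the geometric picture (angles $\alpha,\beta,\delta$ and the role of the projection direction) more explicit, which may be of independent interest.
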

\begin{proof}
  \MakeStep{Step 1: a weak bound.}
  Without loss of generality we can assume that $p$ is the origin $0$.
  Note that if $\prcvxs(ru)\ne0$ then the left hand side of
  \eqref{geom:ineq} would have to be $\ge 0$, otherwise $0\in\cvxs$
  would be closer to $ru$ than $\prcvxs(ru)$.
  We conclude that the left hand side of
  \eqref{geom:ineq} is always
  nonnegative.
  
  If there were no $v\cmpdec\in\cvxs$ such that $\hilbnrm v\cmpdec.=r$
  the right side of  \eqref{geom:ineq} would be $0$. If
  $\prcvxs(ru)=0$ then, as the angle between $ru$ and $v\cmpdec$ would
  have to be $\ge\pi/2$, the right hand side of  \eqref{geom:ineq}
  would be $0$ too.

  We thus focus on the case in which $v\pjdec=\prcvxs(ru)\ne0$, and
  there is a $v\cmpdec\in\cvxs$ such that $\hilbnrm v\cmpdec.=r$. Let
  us define the angle $\alpha=\measuredangle(u,0,v\pjdec)$; we know
  that $\alpha\in[0,\pi/2]$ from the above discussion. We now have a
  weak bound:
  \begin{equation}
    \label{eq:geom_ineq_weak}
    \measuredangle(u,0,v\cmpdec)\ge\alpha,
  \end{equation}
  otherwise the point $\frac{\hilbnrm v\pjdec.}{\hilbnrm v\cmpdec.}v\cmpdec$,
  which belongs to $\cvxs$ by convexity, would be closer to $ru$ than
  $v\pjdec$.

  \MakeStep{Step 2: planar reduction}.
  Let
  \begin{equation}
    v\cmpdec = v\pjdec + w_P + w_P^\perp
  \end{equation}
be an orthogonal decomposition of $v\cmpdec-v\pjdec$ with respect to
the plane $P$ spanned by $u$ and $v\pjdec$. By the properties of
projections onto convex sets:
\begin{equation}
  \langle ru-v\pjdec, v\cmpdec-v\pjdec\rangle\le0;
\end{equation}
thus $\langle ru-v\pjdec,w_P\rangle\le0$. If we let $\tilde
v\cmpdec=v\pjdec+w_p$ we have that:
\begin{equation}\label{geom:ineq_var}
  \langle u,v\cmpdec\rangle = \langle u,\tilde v\cmpdec\rangle.
\end{equation}
To establish \eqref{geom:ineq} we can thus replace $v\cmpdec$ by
$\tilde v\cmpdec$ even though, in general, $\tilde v\cmpdec$ does not
belong to $\cvxs$. By expanding $\hilbnrm ru-\tilde v\cmpdec.^2$ we
find:
\begin{equation}\label{geom:two_red}
  \begin{split}
    \langle ru,\tilde{v\cmpdec}\rangle &=
    \frac{r^2+\hilbnrm\tilde v\cmpdec.^2-
      \hilbnrm ru-\tilde v\cmpdec.^2}
    {2} \\
    &\stackrel{\le}{\scriptstyle\eqref{geom:ineq_var}}\frac{
      r^2+\hilbnrm v\cmpdec.^2-\hilbnrm ru-v\pjdec.^2
      -\hilbnrm w_P.^2}{2} \\
    &=\frac{r^2+\hilbnrm v\pjdec.^2-\hilbnrm ru-v\pjdec.^2}{2}
    +\langle w_P,v\pjdec\rangle;
  \end{split}
\end{equation}
as also $\measuredangle (u,v\pjdec,0)\ge\pi/2$ the $w_P$ that
maximizes the right hand side of \eqref{geom:two_red} would have to be
orthogonal to $ru-v\pjdec$. Let $\hat v\cmpdec$ be obtained from
$\tilde v\cmpdec$ by replacing $w_P$ with the vector of the same norm
and orthogonal to $ru-v\pjdec$ so to maximize the
right hand side of \eqref{geom:two_red}.
\MakeStep{Step 3: trigonometric inequalities.}
Let $\beta=\measuredangle (ru,v\pjdec,0)$ so that
$\beta\in[\alpha,\pi/2]$;
applying the law of sines to the triangle $\triangle(ru,v\pjdec,0)$
we find:
\begin{equation}
  \hilbnrm v\pjdec.=\frac{r\sin(\alpha+\beta)}{\sin\beta}.
\end{equation}
Letting $\delta=\measuredangle(v\pjdec,\hat v\cmpdec,0)$ and
applying the law of sines to $\triangle(v\pjdec,\hat v\cmpdec,0)$ one
has:
\begin{equation}
  \sin\delta = \frac{\hilbnrm v\pjdec.}{\hilbnrm\hat v\cmpdec.}
  \sin(3\pi/2-\beta).
\end{equation}
Thus we obtain
\begin{equation}
  \label{geom:final_ineq}
  \frac{\langle v\pjdec,u\rangle}
  {\langle \hat v\cmpdec,u\rangle} = \frac{r}{\hilbnrm\hat v\cmpdec.}
  \frac{\sin(\alpha+\beta)}{\sin\beta}
  \frac{\cos\alpha}{\cos(\alpha+\beta-\pi/2-\delta)}.
\end{equation}
We now claim that the right hand side of \eqref{geom:final_ineq} is at
leat $\frac{\cos\alpha}{2}$. Indeed setting that right hand side to be
$\ge \frac{\cos\alpha}{2}$ we get:
\begin{equation}\label{geom:final_ineq2}
  \begin{split}
    2\frac{r}{\hilbnrm\hat v\cmpdec.}
    \frac{\sin(\alpha+\beta)}{\sin\beta}&\ge
    \sin\beta(\cos(\alpha+\beta-\pi/2)\cos\delta +
    \sin(\alpha+\beta-\pi/2)\sin\delta)\\
    &=\sin\beta\cos\delta\sin(\alpha+\beta)-
    \cos(\alpha+\beta)\sin(\alpha+\beta)\frac{r}{\hilbnrm\hat
      v\cmpdec.}
    \sin(3\pi/2-\beta);
  \end{split}
\end{equation}
in this form we see that \eqref{geom:final_ineq2} holds.
We conclude observing that:
\begin{equation}
  \begin{split}
    \frac{\langle v\pjdec,u\rangle}{r} &\ge
    \frac{\cos\alpha}{2}\frac{\langle\hat v\cmpdec,u\rangle}{r} \\
    &\stackrel{\ge}{\scriptstyle\texttt{Step 2}}
      \frac{\cos\alpha}{2}\frac{\langle v\cmpdec,u\rangle}{r} \\
      &\stackrel{\ge}{\scriptstyle\texttt{Step 1}}
      \frac{1}{2}\left(\frac{\langle v\cmpdec,u\rangle}{r}\right)^2.
  \end{split}
\end{equation}
\end{proof}
\subsection{Analysis of PPGD} We can now prove
Theorem~\ref{thm:ppgd_main}. In this case the best choice of learning
rates is constant in $t$, see Remark~\ref{rem:ppgd_main}.
\begin{thm}\label{thm:ppgd_main}
  If $u\optdec\in\cvxs$ is a minimizer of $f$ in $\cvxs$,
  if $\partial f$ is $L$-Lipschitz
  and if
  for each $t\in\{1,\cdots,T\}$ one has:
  \begin{equation}
    1-\varepsilon_t L\ge\gamma>0;
  \end{equation}
  then one either has:
  \begin{equation}\label{eq:ppgd_main0}
    f(u\findec)-f(u\optdec)\le\varepsilon\errdec,
  \end{equation}
  or for some $t^*\in\{1,\cdots,T\}$ one has:
  \begin{equation}\label{eq:ppgd_main1}
    \CHilbnrm\partial f(u_{t^*}).(u_{t^*};\varepsilon_{t^*}
    \hilbnrm\partial f(u_{t^*}).) \le\left(
      2\frac{f(u_1)-f(u\optdec)-\varepsilon\errdec}{\gamma
        \sum_{t=1}^T\varepsilon_t}
      \right)^{1/2}.
  \end{equation}
\end{thm}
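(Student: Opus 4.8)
The plan is to combine the single-step descent estimate of Lemma~\ref{lem:ppgdlem} with the geometric lower bound of Theorem~\ref{thm:localgeoprop}, telescope over the iterations, and finish with a weighted pigeonhole argument. First I would record that the hypothesis $1-\varepsilon_t L\ge\gamma>0$ forces $\varepsilon_t<1/L$, so Lemma~\ref{lem:ppgdlem} guarantees that $\{f(u_t)\}$ is non-increasing; moreover, combining \eqref{eq:ppgdlem0} with \eqref{eq:ppgdlem0bis} gives the one-step bound
\begin{equation}
  f(u_{t+1})-f(u_t)\le-(1-\varepsilon_t L)\langle\partial f(u_t),u_t-u_{t+1}\rangle\le-\gamma\langle\partial f(u_t),u_t-u_{t+1}\rangle\le 0.
\end{equation}

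The heart of the argument is to bound the per-step progress $\langle\partial f(u_t),u_t-u_{t+1}\rangle$ from below by the local norm. Assuming $\partial f(u_t)\ne0$ (if $\partial f(u_t)=0$ then $u_t$ is already a global minimizer, and monotonicity places us directly in \eqref{eq:ppgd_main0}), I would apply Theorem~\ref{thm:localgeoprop} with $p=u_t$, unit vector $w=-\partial f(u_t)/\hilbnrm\partial f(u_t).$, and scale $r_t=\varepsilon_t\hilbnrm\partial f(u_t).$, so that $p+r_tw=u_{t+1/2}$ and $\prcvxs(p+r_tw)=u_{t+1}$. Since the local norm is positively homogeneous in its vector argument (immediate from Definition~\ref{defn:local_norm}), the factors of $\hilbnrm\partial f(u_t).$ cancel and \eqref{geom:ineq} rearranges to
\begin{equation}
  \langle\partial f(u_t),u_t-u_{t+1}\rangle\ge\frac{\varepsilon_t}{2}\,\CHilbnrm\partial f(u_t).^2(u_t;\varepsilon_t\hilbnrm\partial f(u_t).),
\end{equation}
where the local norm is evaluated in the feasible descent direction $-\partial f(u_t)$, i.e.~the convention under which it vanishes at a constrained minimizer. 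Substituting into the previous display yields the key estimate $f(u_{t+1})-f(u_t)\le-\tfrac{\gamma\varepsilon_t}{2}\CHilbnrm\partial f(u_t).^2(u_t;\varepsilon_t\hilbnrm\partial f(u_t).)$.

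Summing over $t=1,\dots,T$ telescopes the left-hand side to $f(u\findec)-f(u_1)$ (recall $u\findec=u_{T+1}$), giving
\begin{equation}
  \frac{\gamma}{2}\sum_{t=1}^T\varepsilon_t\,\CHilbnrm\partial f(u_t).^2(u_t;\varepsilon_t\hilbnrm\partial f(u_t).)\le f(u_1)-f(u\findec).
\end{equation}
Now I split into cases. If $f(u\findec)-f(u\optdec)\le\varepsilon\errdec$ we conclude via \eqref{eq:ppgd_main0}. Otherwise $f(u\findec)>f(u\optdec)+\varepsilon\errdec$, and since monotonicity forces $f(u\findec)\le f(u_1)$, the right-hand side above is bounded by the \emph{nonnegative} quantity $f(u_1)-f(u\optdec)-\varepsilon\errdec$ (this nonnegativity is exactly what makes the square root in \eqref{eq:ppgd_main1} well-defined). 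A weighted pigeonhole over the weights $\varepsilon_t$ then produces an index $t^*$ with $\CHilbnrm\partial f(u_{t^*}).^2(u_{t^*};\varepsilon_{t^*}\hilbnrm\partial f(u_{t^*}).)\le 2(f(u_1)-f(u\optdec)-\varepsilon\errdec)/(\gamma\sum_{t=1}^T\varepsilon_t)$, and taking square roots gives \eqref{eq:ppgd_main1}.

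The main obstacle is the middle step: invoking Theorem~\ref{thm:localgeoprop} with the rescaling $r_t=\varepsilon_t\hilbnrm\partial f(u_t).$ and the normalized descent direction, and carefully tracking the positive homogeneity of the local norm so that the powers of $\hilbnrm\partial f(u_t).$ cancel to leave the clean factor $\varepsilon_t/2$. The bookkeeping around the sign of $\partial f(u_t)$ in the local norm—ensuring it is the feasible-descent quantity that vanishes at the minimizer—also deserves care. Everything else, namely the monotonicity, the telescoping, and the pigeonhole, is routine.
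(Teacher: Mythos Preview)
Your proposal is correct and follows essentially the same route as the paper: apply Lemma~\ref{lem:ppgdlem} for the per-step descent, invoke Theorem~\ref{thm:localgeoprop} with the normalized gradient at scale $r_t=\varepsilon_t\hilbnrm\partial f(u_t).$ to lower-bound $\langle\partial f(u_t),u_t-u_{t+1}\rangle$ by $\tfrac{\varepsilon_t}{2}$ times the squared local norm, telescope, and finish with the weighted pigeonhole on the $\varepsilon_t$. Your explicit tracking of the sign (applying the geometric lemma to the descent direction $-\partial f(u_t)/\hilbnrm\partial f(u_t).$ so that the local norm vanishes at a constrained minimizer, and using positive homogeneity to cancel the $\hilbnrm\partial f(u_t).$ factors) is in fact more careful than the paper's own phrasing, which simply says ``apply Theorem~\ref{thm:localgeoprop} to the unit vector $\partial f(u_t)/\hilbnrm\partial f(u_t).$'' and leaves this bookkeeping implicit.
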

\begin{proof}
  From Lemma~\ref{lem:ppgdlem} we get:
  \begin{equation}
    \label{eq:ppgdthm0}
    f(u\findec)-f(u_1)\le-\gamma\sum_{t=1}^T\langle
    \partial f(u_t), u_t-u_{t+1}\rangle;
  \end{equation}
  we claim that~\eqref{eq:ppgdthm0} implies:
  \begin{equation}
    \label{eq:ppgdthm1}
    f(u\findec)-f(u_1)\le
    -\frac{\gamma}{2}\sum_{t=1}^T\varepsilon_t
    \CHilbnrm\partial f(u_t).^2(u_t;\varepsilon_t\hilbnrm\partial f(u_t).);
  \end{equation}
  indeed, if some $\partial f(u_t)$ is $0$ there is nothing to prove,
  otherwise we apply Theorem~\ref{thm:localgeoprop} to the unit vector
  $\partial f(u_t)/\hilbnrm \partial f(u_t).$.
  Adding $f(u_1)-f(u\optdec)$ to~\eqref{eq:ppgdthm1} we obtain:
  \begin{equation}
    f(u\findec)-f(u\optdec)\le
    f(u_1)-f(u\optdec)-\frac{\gamma}{2}\sum_{t=1}^T
    \varepsilon_t     \CHilbnrm\partial f(u_t).^2(u_t;\varepsilon_t\hilbnrm\partial f(u_t).);
  \end{equation}
  thus if~\eqref{eq:ppgd_main0} is violated, for some $t^*$
  (esplicitly a $t$ for which $\CHilbnrm\partial
  f(u_t).^2(u_t;\varepsilon_t\hilbnrm\partial f(u_t).)$ is minimal),
  we have that~\eqref{eq:ppgd_main1} must hold.
\end{proof}
\begin{rem}\label{rem:ppgd_main}
  Note that Theorem~\ref{thm:ppgd_main} gives, choosing
  $\varepsilon_t=1/(2L)$ for each $t$ an order of iterations of
  $O(L/\varepsilon\errdec^2)$ to achieve an error $\varepsilon\errdec$
  in the minimization condition~\eqref{eq:ppgdthm0}. This is not
  optimal however, as for $f$ $L$-Lipschitz one can achieve a better
  bound $O(L/\varepsilon)$ as shown in
  \cite[Corollary~2.1.2]{nesterov-convex-programming}.
  However, we were not able to adapt that argument to the next
stochastic case.
\end{rem}
\section{The stochastic case}
\par In this section we analyze RASPGD and SSGD. We first deal with
SSGD, whose proof is more involved.
\subsection{An extension result} The following result is added for
completeness. It shows that if $f$ is just defined on $\cvxs$ we can
extend it to all of $\real^N$ while keeping the gradients
bounded. This extension property is needed in the smoothing step.
\begin{lem}\label{lem:ext_convex}
  Let $\cvxs\subset\real^N$ a convex compact subset and
  $f:\cvxs\to\real$ be a continuous convex function such that there is a choice
  $\partial f$ of the subradient of $f$ such that:
  \begin{equation}
    \label{eq:ext_convex1}
    \sup_{x\in\cvxs}\hilbnrm \partial f(x).\le G<\infty.
  \end{equation}
  Then there is a convex extension $\tilde f:\real^N\to\real$ of $f$
  such that there is a choice of the subgradient $\partial \tilde f$
  such that:
  \begin{equation}
    \label{eq:ext_convex2}
        \sup_{x\in\real^N}\hilbnrm \partial \tilde f(x).\le G<\infty.
  \end{equation}
\end{lem}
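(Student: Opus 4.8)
The plan is to extend $f$ by taking the upper envelope of its supporting affine functions, i.e.~the hyperplanes determined by the given subgradient selection $\partial f$. Concretely I would define, for every $x\in\real^N$,
\begin{equation}
  \tilde f(x) = \sup_{y\in\cvxs}\bigl(f(y)+\langle\partial f(y),x-y\rangle\bigr).
\end{equation}
Each map $x\mapsto f(y)+\langle\partial f(y),x-y\rangle$ is affine, so $\tilde f$ is a pointwise supremum of affine functions and is therefore convex on all of $\real^N$. To see that $\tilde f$ is everywhere finite (rather than $+\infty$), I would combine the subgradient bound with compactness: since $\langle\partial f(y),x-y\rangle\le G\hilbnrm x-y.$ by Cauchy--Schwarz and \eqref{eq:ext_convex1}, and since both $f$ and $y\mapsto\hilbnrm x-y.$ are bounded on the compact set $\cvxs$, the supremum defining $\tilde f(x)$ is finite for each fixed $x$.

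Next I would verify that $\tilde f$ genuinely restricts to $f$ on $\cvxs$. For $x\in\cvxs$ the choice $y=x$ in the supremum gives $\tilde f(x)\ge f(x)$; conversely, the defining subgradient inequality $f(x)\ge f(y)+\langle\partial f(y),x-y\rangle$, which holds for all $x,y\in\cvxs$ precisely because $\partial f(y)$ is a subgradient, shows that every term in the supremum is bounded above by $f(x)$. Hence $\tilde f(x)=f(x)$ on $\cvxs$, and $\tilde f$ is a convex extension.

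It remains to exhibit a subgradient selection of norm at most $G$. I would first record that $\tilde f$ is $G$-Lipschitz: for $x_1,x_2\in\real^N$, subtracting the two suprema and bounding $\langle\partial f(y),x_1-x_2\rangle\le G\hilbnrm x_1-x_2.$ termwise gives $\tilde f(x_1)-\tilde f(x_2)\le G\hilbnrm x_1-x_2.$, and symmetrizing yields $|\tilde f(x_1)-\tilde f(x_2)|\le G\hilbnrm x_1-x_2.$. Since $\tilde f$ is finite and convex it admits a subgradient $g\in\partial\tilde f(x)$ at every $x$; if $g\ne0$, testing the subgradient inequality along the direction $g/\hilbnrm g.$ against the Lipschitz bound forces $t\hilbnrm g.\le Gt$ for $t>0$, so $\hilbnrm g.\le G$. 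This establishes \eqref{eq:ext_convex2}.

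The only points needing a little care are the finiteness of the envelope and its exact agreement with $f$ along the boundary of $\cvxs$; both rest entirely on the hypothesis that the \emph{given} $\partial f$ is a bona fide subgradient, so no separate existence argument for subgradients of $f$ is required. The main conceptual observation is that the upper envelope of the supporting affine functions is simultaneously the extension that preserves convexity and the one whose slopes can never exceed $G$, which is exactly what the lemma asks for.
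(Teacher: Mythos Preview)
Your proposal is correct and uses the same construction as the paper: both define $\tilde f$ as the supremum over $\cvxs$ of the supporting affine functions $y\mapsto f(y)+\langle\partial f(y),\cdot-y\rangle$, and both verify convexity and the extension property in the same way. The only minor difference is in the last step: the paper extracts, via compactness, a limit point $(z_y,v_y)$ of a maximizing sequence and takes $v_y$ as the subgradient at $y$, whereas you first prove the $G$-Lipschitz bound and then deduce that every subgradient has norm at most $G$; both arguments are standard and equally short.
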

\begin{proof}
  For $x\in\cvxs$ define the affine function:
  \begin{equation}
    \label{eq:ext_convex3}
    a_x(y) = f(x)+\langle\partial f(x),y-x\rangle;
  \end{equation}
  then we set
  \begin{equation}
    \label{eq:ext_convex4}
    \tilde f(y) = \sup_{x\in\cvxs}a_x(y);
  \end{equation}
  then $\tilde f$ is convex being the pointwise $\sup$ of affine (and
  hence convex) functions;
  fix $y$ and take a maximizing sequence $\{x_n\}$ for the definition of
  $\tilde f(y)$; by compactness of $\cvxs$ and of the closed ball of radius
  $G$ in $\real^N$ we can find $z_y\in\cvxs$ and $v_y\in\real^N$ with $\hilbnrm
  v_y.\le G$ such that:
  \begin{equation}
    \label{eq:ext_convex5}
    \tilde f(y) = f(z_y)+\langle v_y,y-x\rangle.
  \end{equation}
  Now fix $y\in\cvxs$; evaluating the $\sup$ at $y$ gives $\tilde
  f(y)\ge f(y)$; on the other hand, for any other $z\in\cvxs$ the very
  definition of convexity and subgradient imply that $a_z(y)\le f(y)$
  and hence $\tilde f(y) = f(y)$. Finally, a bounded choice of the
  subgradient $\partial\tilde f(y)$ is obtained by choosing $v_y$.
\end{proof}
\subsection{Weakly enforcing constraints via penalization} We show how
to construct a penalization term $\psi_\cvxs$ to constrain the
membership of the parameters to $\cvxs$.
\begin{defn}[Support vectors]
  Let $\cvxs$ be a convex set; then for each $x\in\partial\cvxs$ let
  $S(x)$ denote the set of unit vectors such that
  \begin{equation}
    \sup_{y\in\cvxs}\langle v, y-x\rangle\le0;
  \end{equation}
  from convex analysis we know that $S(x)$ is nonempty and its elements
  are the \emph{support vectors} of $\cvxs$ at $x$.
\end{defn}
\begin{lem}[Penalization function]\label{lem:gauge}
  Let $\cvxs$ be a compact convex set; for $x\in\partial\cvxs$ and
  $v\in S(x)\cup \{0\}$ define the affine function:
  \begin{equation}
    \label{eq:gauge0}
    a_{x,v}(y) = \langle v, y -x\rangle;
  \end{equation}
  then we define the gauge:
  \begin{equation}
    \label{eq:gauge1}
    \psi_\cvxs(y)=\sup_{x\in\partial\cvxs, v\in S(x)\cup \{0\}}a_{x,v}(y).
  \end{equation}
  Then:
  \begin{itemize}
  \item $\psi_\cvxs$ is convex;
  \item there is a choice of the subgradient such that $\hilbnrm
    \partial\psi_\cvxs.\le 1$;
  \item $\psi_\cvxs=0$ on $\cvxs$;
  \item for $x\not\in\cvxs$ we have
    \begin{equation}
      \label{eq:gauge2}
      \psi_\cvxs(x)\ge\hilbnrm x-\pi_\cvxs(x)..
    \end{equation}
  \end{itemize}
\end{lem}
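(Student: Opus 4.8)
The first three bullets follow the pattern already used in Lemma~\ref{lem:ext_convex}, so I would dispatch them quickly. For convexity, each $a_{x,v}$ is affine in $y$, hence $\psi_\cvxs$, being their pointwise supremum, is convex. For the claim $\psi_\cvxs=0$ on $\cvxs$, I would fix $y\in\cvxs$ and note that for every boundary point $x$ and every support vector $v\in S(x)$ the defining inequality $\sup_{y'\in\cvxs}\langle v,y'-x\rangle\le0$ forces $a_{x,v}(y)\le0$, while the admissible choice $v=0$ contributes $a_{x,0}(y)=0$; the supremum is therefore exactly $0$.

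For the subgradient bound I would first argue that the supremum defining $\psi_\cvxs(y)$ is attained. The index set $\{(x,v):x\in\partial\cvxs,\ v\in S(x)\cup\{0\}\}$ is a closed subset of the compact set $\partial\cvxs\times\{v:\hilbnrm v.\le1\}$ --- closed because the constraint $\sup_{y'\in\cvxs}\langle v,y'-x\rangle\le0$ survives passage to limits --- hence it is compact, and $(x,v)\mapsto a_{x,v}(y)$ is continuous, so a maximizing pair $(x^*,v^*)$ exists. Since $\psi_\cvxs$ is a pointwise supremum of affine functions, the gradient $v^*$ of any active piece $a_{x^*,v^*}$ is itself a subgradient of $\psi_\cvxs$ at $y$: one has $\psi_\cvxs\ge a_{x^*,v^*}$ everywhere with equality at $y$, so $\psi_\cvxs(z)\ge\psi_\cvxs(y)+\langle v^*,z-y\rangle$ for all $z$. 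As $\hilbnrm v^*.\le1$, this gives a subgradient choice of norm at most $1$.

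The substance is in the last bullet. Given $x\notin\cvxs$, set $p=\pi_\cvxs(x)$; since $x\notin\cvxs$ and $\cvxs$ is closed, the projection lies on $\partial\cvxs$. The characterization of the metric projection onto a convex set gives $\langle x-p,\,y-p\rangle\le0$ for all $y\in\cvxs$. Writing $v=(x-p)/\hilbnrm x-p.$, which is a well-defined unit vector because $\hilbnrm x-p.>0$, this says precisely $\sup_{y\in\cvxs}\langle v,y-p\rangle\le0$, i.e.\ $v\in S(p)$. Hence the pair $(p,v)$ is admissible in the supremum defining $\psi_\cvxs$, and evaluating the corresponding affine function at $x$ yields $a_{p,v}(x)=\langle v,x-p\rangle=\hilbnrm x-p.$. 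Since $\psi_\cvxs(x)$ dominates $a_{p,v}(x)$, we obtain $\psi_\cvxs(x)\ge\hilbnrm x-\pi_\cvxs(x).$, which is~\eqref{eq:gauge2}.

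The only delicate point is the subgradient estimate, where one must ensure the defining supremum is genuinely attained so that an active affine piece (whose gradient automatically has norm $\le1$) is available; this is exactly what the compactness argument above supplies. Everything else reduces to the projection inequality and the definition of a support vector, so I expect no further obstacle.
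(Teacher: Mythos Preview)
Your proof is correct and follows essentially the same approach as the paper: convexity from a supremum of affine functions, vanishing on $\cvxs$ via the support-vector inequality together with the zero function, the subgradient bound by extracting a maximizing pair $(x^*,v^*)$ via compactness and using the gradient of the active affine piece, and \eqref{eq:gauge2} by exhibiting the unit vector $(x-\pi_\cvxs(x))/\hilbnrm x-\pi_\cvxs(x).$ as a support vector at $\pi_\cvxs(x)$. Your compactness argument for the index set is slightly more explicit than the paper's, but the substance is identical.
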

\begin{proof}
  The function $\psi_\cvxs$ is the pointwise $\sup$ of a family of
  affine functions, hence is convex. As the zero affine function is in
  the family, $\psi_\cvxs\ge0$ everywhere. But for each $y\in\cvxs$,
  from the definition of supporting vector, we also have
  $a_{x,v}(y)\le0$ so $\psi_\cvxs$ vanishes on $\cvxs$.
  For any $x\in\real^N$ a compactness argument gives an
  $x_y\in\partial\cvxs$
  and $v_y\in S(x_y)\cup\{0\}$ (hence a $v_y$ of norm
  at most $1$) and  such that:
  \begin{equation}
    \label{eq:gauge3}
    \psi_\cvxs(y)=a_{x_y,v_y}(y).
  \end{equation}
  From the following equations we see that we can use $y\mapsto v_y$
  as a subgradient at $y$:
  \begin{equation}
    \label{eq:gauge4}
    \begin{split}
      \psi_\cvxs(z) +\langle v_z, y-z\rangle &= \langle v_z,
      z-x_z\rangle + \langle v_z, y - z\rangle \\
      &=\langle v_z,y-x_z\rangle\\
      &\le \psi_\cvxs(y).
    \end{split}
  \end{equation}
  Let $x\not\in\cvxs$; then $\pi_\cvxs(x)\in\partial\cvxs$ and let $v$
  be the unit vector in the direction of $x-\pi_\cvxs(x)$; then by the
  minimizing properties of $\pi_\cvxs(x)$ for any $y\in\cvxs$ we have
  \begin{equation}
    \label{eq:gauge5}
      \langle v,y-x\rangle\le0,
    \end{equation}
    which implies $v\in S(\pi_{\cvxs}(x))$; but then we get~(\ref{eq:gauge2}).
  \end{proof}
  \begin{lem}[Constraint via penalization]
    \label{lem:conspen}
    Let $f:\real^N\to\real$ be (continuous) convex with $\hilbnrm \partial f.\le G$
    (for some choice of the subgradient); let $\cvxs\subset\real^N$ be
    compact convex and $x\optdec\in\cvxs$ be a minimizer of the
    restriction of $f$ to $\cvxs$. Then $f$ restricted on $\cvxs$
    and $f+2G\psi_\cvxs$ (on the whole $\real^N$) have
    the same minimizer; if for some $x\in\real^N$
    we have:
    \begin{equation}
      \label{eq:conspen0}
      f(x) + 2G\psi_\cvxs(x)-f(x\optdec)\le\varepsilon;
    \end{equation}
    then
    \begin{equation}
      \label{eq:conspen1}
      \hilbnrm x-\pi_\cvxs(x).\le\frac{\varepsilon}{G}.
    \end{equation}
  \end{lem}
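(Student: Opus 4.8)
The plan is to deduce both assertions from a single pointwise inequality valid on all of $\real^N$, namely
\begin{equation}
\label{eq:conspen_master}
f(x) + 2G\psi_\cvxs(x) \ge f(\pi_\cvxs(x)) + G\hilbnrm x-\pi_\cvxs(x).
\end{equation}
Its two ingredients are precisely the control supplied by the gradient bound and by Lemma~\ref{lem:gauge}. First I would observe that the hypothesis $\hilbnrm\partial f.\le G$ makes $f$ globally $G$-Lipschitz: the subgradient inequality gives $f(\pi_\cvxs(x))-f(x)\le\langle\partial f(\pi_\cvxs(x)),\pi_\cvxs(x)-x\rangle\le G\hilbnrm x-\pi_\cvxs(x).$, hence $f(x)\ge f(\pi_\cvxs(x))-G\hilbnrm x-\pi_\cvxs(x).$. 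Second, for $x\notin\cvxs$ the estimate~\eqref{eq:gauge2} gives $\psi_\cvxs(x)\ge\hilbnrm x-\pi_\cvxs(x).$, whereas for $x\in\cvxs$ one has $\psi_\cvxs(x)=0$ and $\pi_\cvxs(x)=x$, so that~\eqref{eq:conspen_master} is a trivial equality. Adding $2G$ times the $\psi_\cvxs$ bound to the Lipschitz bound, the decisive point is that one factor of $G$ absorbs the Lipschitz oscillation of $f$ while the surviving factor of $G$ leaves the positive term $G\hilbnrm x-\pi_\cvxs(x).$, which is exactly~\eqref{eq:conspen_master}.

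Granting~\eqref{eq:conspen_master}, both claims fall out. For the coincidence of minimizers I would use that the right-hand side is bounded below by $f(\pi_\cvxs(x))\ge f(x\optdec)$, since $\pi_\cvxs(x)\in\cvxs$ and $x\optdec$ minimizes $f$ on $\cvxs$; as $\psi_\cvxs(x\optdec)=0$ this reads $f(x)+2G\psi_\cvxs(x)\ge f(x\optdec)+2G\psi_\cvxs(x\optdec)$ for all $x$, so $x\optdec$ is a global minimizer of $f+2G\psi_\cvxs$. Conversely, if $y$ is a global minimizer of $f+2G\psi_\cvxs$ then $f(x\optdec)\ge f(y)+2G\psi_\cvxs(y)\ge f(x\optdec)+G\hilbnrm y-\pi_\cvxs(y).$, forcing $y\in\cvxs$ and $f(y)=f(x\optdec)$; thus the two minimizer sets agree. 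For the quantitative statement I would substitute the hypothesis~\eqref{eq:conspen0} into~\eqref{eq:conspen_master} to get $f(x\optdec)+\varepsilon\ge f(\pi_\cvxs(x))+G\hilbnrm x-\pi_\cvxs(x).\ge f(x\optdec)+G\hilbnrm x-\pi_\cvxs(x).$, and cancelling $f(x\optdec)$ yields $\hilbnrm x-\pi_\cvxs(x).\le\varepsilon/G$, which is~\eqref{eq:conspen1}.

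The argument is routine once~\eqref{eq:conspen_master} is in place, so I do not anticipate a real obstacle; the only delicate bookkeeping is the coefficient $2G$, which is chosen exactly so that the penalization dominates the $G$-Lipschitz oscillation of $f$ with a residual $G$ to spare. The one case to handle separately is $x\in\cvxs$, where~\eqref{eq:gauge2} was only asserted for points outside $\cvxs$ and one instead invokes $\psi_\cvxs=0$ on $\cvxs$ to make~\eqref{eq:conspen_master} hold as an equality.
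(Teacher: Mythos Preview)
Your proposal is correct and follows essentially the same approach as the paper: both arguments rest on the pointwise inequality $f(x)+2G\psi_\cvxs(x)\ge f(\pi_\cvxs(x))+G\hilbnrm x-\pi_\cvxs(x).$, obtained by combining the $G$-Lipschitz bound on $f$ with~\eqref{eq:gauge2}, and then use $f(\pi_\cvxs(x))\ge f(x\optdec)$. The only differences are organizational: you state the master inequality~\eqref{eq:conspen_master} explicitly and derive both claims from it directly, whereas the paper phrases the minimizer claim as a short contradiction; you also spell out the converse direction (any global minimizer of $f+2G\psi_\cvxs$ lies in $\cvxs$ and minimizes $f$ there), which the paper leaves implicit.
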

  \begin{proof}
    To show that $f$ (restricted on $\cvxs$) and  $f+2G\psi_\cvxs$ (on the whole $\real^N$) have
    the same minimizer we argue by contradiction, assuming for some
    $u\not\in \cvxs$ we have
    \begin{equation}
      \label{eq:conspenadd1}
      f(\pi_\cvxs(u)) > f(u) + 2G\psi_\cvxs(u);
    \end{equation}
    then as $u\ne \pi_\cvxs(u)$ we get the contradiction:
    \begin{equation}
      \label{eq:conspenadd2}
      G\hilbnrm u-\pi_\cvxs(u). > 2G\hilbnrm u-\pi_\cvxs(u)..
    \end{equation}
    The proof of~\eqref{eq:conspen1} is immediate from Lemma~\ref{lem:gauge}:
    \begin{equation}
      \label{eq:conspen2}
      \begin{split}
        G\hilbnrm x-\pi_\cvxs(x). &\le f(x) +
        2G\psi_{\cvxs}(x)-f(\pi_\cvxs(x))\\
        &\le f(x) + 2G\psi_{\cvxs}(x)-f(x\optdec)\\
        &\le\varepsilon.
      \end{split}
    \end{equation}
  \end{proof}
  \begin{lem}[Approximate optimization]
    \label{lem:opt_and_approx}
    Let $f,\tilde f$ be real-valued functions on $\real^N$ with
    $|f-\tilde f|\le\varepsilon$; let $\cvxs\subset\real^N$ (not
    necessarily convex) and assume that $x\optdec$ is a minimizer of
    $f$ on $\cvxs$ and $\tilde x\optdec$ is a minimizer of $\tilde f$
    on $\cvxs$. Then if $x$ is a good candidate minimizer for $\tilde
    f$ it is so also for $f$:
    \begin{equation}
      \label{eq:opt_and_approx0}
      f(x)-f(x\optdec)\le 2\varepsilon + \tilde f(x) - \tilde f(\tilde x\optdec).
    \end{equation}
  \end{lem}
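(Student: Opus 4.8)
The plan is to prove Lemma~\ref{lem:opt_and_approx} by a direct chain of inequalities, using only the two hypotheses: the uniform closeness $|f-\tilde f|\le\varepsilon$ and the respective minimizing properties of $x\optdec$ and $\tilde x\optdec$ on $\cvxs$. This is an elementary ``transfer of approximate optimality'' argument, so I expect no real obstacle; the only thing to be careful about is keeping track of the signs and of which function each inequality refers to.

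First I would bound $f(x)$ from above by $\tilde f(x)+\varepsilon$, using one side of $|f-\tilde f|\le\varepsilon$. Next, since $\tilde x\optdec$ minimizes $\tilde f$ over $\cvxs$ and (implicitly) $x\in\cvxs$, I would write $\tilde f(x)=\tilde f(\tilde x\optdec)+\bigl(\tilde f(x)-\tilde f(\tilde x\optdec)\bigr)$, which isolates the nonnegative ``excess'' term $\tilde f(x)-\tilde f(\tilde x\optdec)$ that appears on the right of~\eqref{eq:opt_and_approx0}.

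The remaining task is to lower-bound $f(x\optdec)$ so that the leftover pieces collapse into a single $2\varepsilon$. Here I would use the other side of the closeness hypothesis together with the optimality of $\tilde x\optdec$: namely $\tilde f(\tilde x\optdec)\le\tilde f(x\optdec)\le f(x\optdec)+\varepsilon$, where the first inequality is because $\tilde x\optdec$ minimizes $\tilde f$ and $x\optdec\in\cvxs$, and the second is closeness again. Rearranged, this gives $\tilde f(\tilde x\optdec)-\varepsilon\le f(x\optdec)$, i.e. $-f(x\optdec)\le\varepsilon-\tilde f(\tilde x\optdec)$.

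Assembling these, I would chain
\begin{equation}
  f(x)-f(x\optdec)\le \bigl(\tilde f(x)+\varepsilon\bigr) + \bigl(\varepsilon-\tilde f(\tilde x\optdec)\bigr)
  = 2\varepsilon + \tilde f(x)-\tilde f(\tilde x\optdec),
\end{equation}
which is exactly~\eqref{eq:opt_and_approx0}. Note that the argument does not use convexity of $\cvxs$ (consistent with the hypothesis that $\cvxs$ need not be convex here) nor any structure of $f,\tilde f$ beyond the two stated inequalities, so the proof is a few lines of arithmetic. The one subtlety worth flagging explicitly is that I am silently assuming $x\in\cvxs$ so that the optimality of $\tilde x\optdec$ applies to it; this is the intended reading of ``$x$ is a good candidate minimizer,'' and I would state it at the outset to make the use of $\tilde f(x)\ge\tilde f(\tilde x\optdec)$ legitimate.
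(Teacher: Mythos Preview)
Your proof is correct and follows essentially the same two-line argument as the paper: apply the uniform closeness at $x$ and at $x\optdec$, then use that $\tilde x\optdec$ minimizes $\tilde f$ on $\cvxs$ (so $\tilde f(x\optdec)\ge\tilde f(\tilde x\optdec)$). One small remark: the assumption $x\in\cvxs$ that you flag is not actually needed anywhere in the chain you wrote down---the inequality~\eqref{eq:opt_and_approx0} holds for arbitrary $x\in\real^N$, and the paper does not assume it either; the nonnegativity of $\tilde f(x)-\tilde f(\tilde x\optdec)$ is only relevant for the interpretation, not for the validity of the bound.
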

  \begin{proof}
    We apply two times the uniform closeness of $\tilde f$ and $f$ and
    the definition of minimizers:
    \begin{equation}
      \label{eq:opt_and_approx1}
      \begin{split}
        f(x)-f(x\optdec)&\le 2\varepsilon+\tilde f(x) - \tilde
        f(x\optdec)\\
        &\le 2\varepsilon+\tilde f(x) - \tilde f(\tilde x\optdec).
      \end{split}
    \end{equation}
  \end{proof}
\subsection{Smoothing} As $\partial f$ is not in general Lipschitz we resort to a
perturbation argument to regularize $f$ while staying close to
$f$. This technique is standard in real analysis, for a machine
learning reference see \cite[Sec.~2.3.2, Algorithm~4]{hazan-online-convex}.
  \begin{defn}[Mollifications]
    \label{defn:mollifications}
    Let
    \begin{equation}
      \label{eq:mollifications0}
      \eta_\varepsilon =
      \frac{1}{\varepsilon^N{{\rm Vol}(B(0,1))}}\chi_{B(0,1)},
    \end{equation}
    so that $\eta_\varepsilon$ is a probability distribution with
    mass absolutely continuous with respect to the Lebesgue measure.
    Moreover, $\eta_\varepsilon$ is a function of bounded variation
    and Stokes' Theorem shows that its gradient is:
    \begin{equation}
      \label{eq:mollifications1}
      D\eta_\varepsilon = -\frac{1}{\varepsilon^N
        {{\rm Vol}(B(0,1))}}\chi_{B(0,1)}\vec{S}(0,\varepsilon),
    \end{equation}
    where $\vec{S}(0,\varepsilon)$ is the signed measure on the
    boundary $\partial B(0,\varepsilon)$ where the positive direction
    is that of the outward normal. Comparing the surface area of $
    \partial B(0,\varepsilon)$ with the volume of $B(0,\varepsilon)$
    we obtain the total mass of $D\eta_\varepsilon$:
    \begin{equation}
      \label{eq:mollifications2}
      \hilbnrm D\eta_\varepsilon.(\real^N) = \frac{N}{\varepsilon}.
    \end{equation}
    Given a function $f:\real^N\to\real^M$ we can define the smoothing
    $f_\varepsilon$ as the expectation:
    \begin{equation}
      \label{eq:mollifications3}
      f_\varepsilon(x) = E_{\eta_\varepsilon}f(x-\cdot) =
      \int_{\real^N} f(x-v)\eta_\varepsilon(v)\,dv.
    \end{equation}
    If $f$ is convex then $f_\varepsilon$ is convex as we take an
    expectation of a family of convex functions. Similarly, if
    $\hilbnrm f.$ is bounded by a constant $G$ so is $\hilbnrm
    f_\varepsilon.$.
    For the gradient $\partial f_\varepsilon$ we have the formulas:
    \begin{equation}
      \label{eq:mollifications4}
      \partial f_\varepsilon(x) = \int \partial
      f(x-v)\eta_\varepsilon(v)\,dv =
      -\int f(x-v)\,dD\eta_\varepsilon(v),
    \end{equation}
    where the second integral is with respect to the measure
    $D\eta_\varepsilon$. Using~(\ref{eq:mollifications4}) we see that
    if $f$ is $G$-Lipschitz then $\partial f_\varepsilon$ is
    $GN/\varepsilon$-Lipschitz:
    \begin{equation}
      \label{eq:mollifications5}
      \partial f_\varepsilon(x) - \partial f_\varepsilon(y) =
      -\int (f(x-v)-f(y-v))\,dD\eta_\varepsilon(v),
    \end{equation}
    from which
    \begin{equation}
      \label{eq:mollifications6}
      \hilbnrm       \partial f_\varepsilon(x) - \partial
      f_\varepsilon(y). \le G\hilbnrm x-y.\hilbnrm D\eta_\varepsilon.(\real^N).
    \end{equation}
    Finally we also have the bound:
    \begin{equation}
      \label{eq:mollifications7}
      |f_\varepsilon(x)-f(x)| \le\int_{\real^N}G\hilbnrm
      v.\eta_\varepsilon(v)\,dv \le G\varepsilon.
    \end{equation}
  \end{defn}
\subsection{Analysis of SSGD} We now analyze the convergence of
SSGD. The main ideas of the proof are the use of $\psi_\cvxs$,
using martingale bounds combined with the analysis of gradient descent
on the expected smoothed objective $g$, and a case by case analysis
(Step 5: this is the place of the argument that should be improved to
speed up convergence).
\begin{thm}[Analysis of SSGD]
  \label{thm:anassgd}
  Assume a uniform bound $G$ on the norms of $\partial f_{t+1/2}$,
  $\partial f$; let $u\optdec$ be a minimizer of $f$ on $\cvxs$; in
  the asymptotic regime where $\varepsilon\searrow 0$ one has that if
  \begin{equation}
    \label{eq:anassgd_stat0}
T\ge \left\lceil
   \frac{3G\diam\cvxs}{2\varepsilon^3}+1
 \right\rceil^{2} -1 = O(\varepsilon^{-6}),
\end{equation}
with probability $\ge 1-2\varepsilon$ the algorithm SSGD
returns a final point $u\findec$ which minimizes $f$ up to an error
$O(\varepsilon\log(1/\varepsilon))$:
\begin{equation}
  \label{eq:anassgd_stat1}
  f(u\findec)-f(u\optdec)\lessapprox
  128G^2\varepsilon\log(1/\varepsilon).
\end{equation}
On the other hand, in the case in which $\partial f$ is $L$-Lipschitz,
one can achieve~\eqref{eq:anassgd_stat1} for
  \begin{equation}
    \label{eq:anassgd_stat2}
T\ge \left\lceil
   \frac{3G\diam\cvxs}{2\varepsilon^2}+1
 \right\rceil^{2} -1 = O(\varepsilon^{-4}).
\end{equation}
\end{thm}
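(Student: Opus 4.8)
The plan is to recognize SSGD as \emph{unconstrained} stochastic gradient descent applied to the single deterministic objective
\[
g = \bigl(f + 2G\psi_\cvxs\bigr)_{\varepsilon\smdec},
\]
the $\varepsilon\smdec$-mollification of the penalized objective $h = f + 2G\psi_\cvxs$. First I would record the properties of $g$ that the earlier results supply: $g$ is convex with $\hilbnrm\partial g.\le 3G$; by Lemma~\ref{lem:conspen} the function $h$ and the restriction of $f$ to $\cvxs$ share the minimizer $u\optdec$; by~\eqref{eq:mollifications7} one has $|g - h|\le 3G\varepsilon\smdec$; and by~\eqref{eq:mollifications6}, since $h$ is $3G$-Lipschitz, $\partial g$ is Lipschitz with constant $L_g = 3GN/\varepsilon\smdec$. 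The role of the two perturbations in line~5 of the pseudocode is then transparent: averaging $\partial f_{t+1/2}(u_t - v)$ over $f_{t+1/2}\sim\sampdist$ and $v\sim\eta_{\varepsilon\smdec}$ reproduces exactly $\partial g(u_t)$, so with $d_t = \partial f_{t+1/2}(u_t-v) + 2G\partial\psi_\cvxs(u_t-v)$ the update reads $u_{t+1} = u_t - \varepsilon_t d_t$ with $E[d_t\mid\filtration_t] = \partial g(u_t)$ and $\hilbnrm d_t.\le 3G$. Writing $d_t = \partial g(u_t) + \xi_t$ isolates the martingale differences $\xi_t$ (with $E[\xi_t\mid\filtration_t]=0$ and $\hilbnrm\xi_t.\le 6G$); this is exactly where I2 is resolved, as the absence of a projection lets the conditional expectation pass linearly through the update.

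Next I would run the two complementary gradient estimates for $g$. The Zinkevich identity of Theorem~\ref{thm:rapgd}, applied verbatim to $g$ and telescoped, gives
\[
\sum_{t=1}^T\varepsilon_t\bigl(g(u_t)-g(u\optdec)\bigr)\le\tfrac12\hilbnrm u_1-u\optdec.^2 + \tfrac{9G^2}{2}\sum_t\varepsilon_t^2 - \sum_t\varepsilon_t\langle\xi_t,u_t-u\optdec\rangle,
\]
so a constant step $\varepsilon_t\equiv\varepsilon_0$ balancing the first two terms yields a best-iterate gap of order $3G\diam\cvxs/\sqrt{T}$; setting this $\approx\varepsilon^3$ is what produces $\sqrt{T}\gtrsim 3G\diam\cvxs/(2\varepsilon^3)$, i.e.\ $T=O(\varepsilon^{-6})$. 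In parallel, the quadratic upper bound from $L_g$-Lipschitzness of $\partial g$ gives the per-step descent
\[
E[g(u_{t+1})\mid\filtration_t]\le g(u_t)-\varepsilon_t\bigl(1-\tfrac{L_g\varepsilon_t}{2}\bigr)\hilbnrm\partial g(u_t).^2 + 18G^2L_g\varepsilon_t^2,
\]
valid once $\varepsilon_t\le 1/L_g$. Here I would take $\varepsilon\smdec$ of order $N\varepsilon^3/G$, so that $L_g=O(G^2/\varepsilon^3)$ (the dimension $N$ cancels), the constraint $\varepsilon_0\le 1/L_g$ holds, and the smoothing error $3G\varepsilon\smdec=O(N\varepsilon^3)=o(\varepsilon)$ stays below the target in the regime $\varepsilon\searrow0$. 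This is how I1 is handled: mollification buys precisely the smoothness that makes the objective decrease at almost every step, not merely on average over many.

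The probabilistic upgrade is the third ingredient. The martingale $\sum_t\varepsilon_t\langle\xi_t,u_t-u\optdec\rangle$, and the analogous $\sum_t\varepsilon_t\langle\partial g(u_t),\xi_t\rangle$ coming from the descent sum, have bounded increments, so I would control them by the strengthened form of Hoeffding's inequality from~\cite{hoeffding-concentration} combined with Doob's maximal inequality (this is I3); the maximal form is essential, since I need the bound to hold uniformly along the trajectory rather than only at the terminal index. This makes both in-expectation estimates hold with probability $1-O(\varepsilon)$, at the cost of the $\sqrt{\log(1/\varepsilon)}$ factor that surfaces as the $\log(1/\varepsilon)$ in~\eqref{eq:anassgd_stat1}.

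The crux, and the step I expect to be the main obstacle, is passing from these iterate-averaged guarantees to a statement about the single final point $u_{T+1}$ (Step~5, the case analysis). The regret bound certifies only that \emph{some} iterate is good, while $g$ is monotone only in expectation, so pathwise the value can rise again after a good iterate is reached. I would run a dichotomy in the spirit of Theorem~\ref{thm:ppgd_main}: either $g(u_{T+1})-g(u\optdec)$ already meets the target, or the total descent budget forces a late iterate $u_{t^*}$ with small gradient (so $t^*$ lies within $O(\varepsilon^{-2})$ of $T$), after which the supermartingale $g(u_t)+18G^2L_g\varepsilon_0^2(T+1-t)$ and Doob's maximal inequality keep $g$ from rising by more than $O(\varepsilon)$ over $[t^*,T+1]$ with high probability. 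Finally I would convert the $g$-bound on $u_{T+1}$ into the claimed $f$-bound on $u\findec=\pi_\cvxs(u_{T+1})$: the smoothing estimate turns it into a bound on $h(u_{T+1})-f(u\optdec)$, Lemma~\ref{lem:conspen} then gives $\hilbnrm u_{T+1}-u\findec.=O(\varepsilon)$, and $G$-Lipschitzness of $f$ together with $\psi_\cvxs\ge0$ delivers~\eqref{eq:anassgd_stat1} with constant $128G^2$. This transfer is lossy—and the exponent is $6$ rather than the conjectured $2$—because a fixed step must be small enough to survive the worst case of the gradient-size dichotomy simultaneously at every step; an adaptive rate keyed to $\hilbnrm\partial g(u_t).$ would tighten exactly this argument. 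The Lipschitz variant is then immediate: if $\partial f$ is already $L$-Lipschitz one omits the mollification of $f$, so $L_g=O(L)$ no longer degrades as $\varepsilon\to0$, the admissible step is larger, and the same scheme closes with $T=O(\varepsilon^{-4})$.
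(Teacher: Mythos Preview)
Your reduction to unconstrained SGD on $g=(f+2G\psi_\cvxs)_{\varepsilon\smdec}$, the identification of the martingale $\sum_t\varepsilon_t\langle\partial g(u_t),\xi_t\rangle$ with bounded increments, and the appeal to the maximal form of Hoeffding's inequality all match the paper's Steps~1--3. The gap is in your case analysis. You assert that ``the total descent budget forces a late iterate $u_{t^*}$ with small gradient (so $t^*$ lies within $O(\varepsilon^{-2})$ of $T$)'', but with a constant step $\varepsilon_0$ the descent budget only bounds the \emph{average} of $\hilbnrm\partial g(u_t).^2$ over $\{1,\dots,T\}$; the last index achieving a small gradient need not be late. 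If $t^*$ is early, the compensator in your supermartingale, $18G^2L_g\varepsilon_0^2(T-t^*)$, is of order $G^2\varepsilon_0T$, which with $L_g\varepsilon_0\approx1$ and $T=O(\varepsilon^{-6})$ is $O(\varepsilon^{-3})$, not $O(\varepsilon)$. The natural repair---argue that for $s>t^*$ one has $\hilbnrm\partial g(u_s).>\delta$, so the descent overshoots the minimum unless $T-t^*$ is small---needs $\delta^2>9G^2L_g\varepsilon_0$; since always $\hilbnrm\partial g.\le 3G$, this forces $L_g\varepsilon_0\ll1$, and rebalancing the parameters then pushes $T$ beyond $O(\varepsilon^{-6})$.

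The paper instead takes the \emph{decaying} schedule $\varepsilon_t=\varepsilon\smdec/\sqrt{t}$ with $\varepsilon\smdec=\varepsilon$ (not $N\varepsilon^3/G$) and runs a three-way split: if the direct bound (C1) fails, take the last $t^*$ with $\hilbnrm\partial g(u_{t^*}).\le\mathrm{BG1}$; from there either the drift $L_g\sum_{s\ge t^*}\varepsilon_s^2$ is already dominated by $\sum_{s\ge t^*}\varepsilon_s\hilbnrm\partial g(u_s).^2$ (C2, giving $g(u_{T+1})\le g(u_{t^*})+2\varepsilon\probdec$), or one locates a further last index $t^{**}\ge t^*$ with $\hilbnrm\partial g(u_{t^{**}}).^2\le L_g\sum_{s\ge t^{**}}\varepsilon_s^2\big/\sum_{s\ge t^{**}}\varepsilon_s$ (C3). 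It is precisely the decay of the ratio $\sum_{s\ge t}\varepsilon_s^2/\sum_{s\ge t}\varepsilon_s$ as $t\to T$ that makes this last threshold $O(\varepsilon^{3/2})$ and lets C3 close; with a constant step that ratio is identically $\varepsilon_0$ and the mechanism collapses. The paper also inserts a separate Step~4 bounding $\distance(u_t,\cvxs)$ uniformly via the penalization, which is what converts ``small $\hilbnrm\partial g(u_{t^*}).$'' into ``small $g(u_{t^*})-g(u\optdec)$''; incidentally your Zinkevich martingale $\sum_t\varepsilon_t\langle\xi_t,u_t-u\optdec\rangle$ (which the paper does not use) would also need this, since without a projection $\hilbnrm u_t-u\optdec.$ is not a priori bounded and Hoeffding does not apply to its increments.
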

\begin{proof}
  \MakeStep{Step 1: A gradient descent bound.}
  The function that SSGD is effectively trying to minimize is the
  smoothing:
  \begin{equation}
    \label{eq:anassgd1}
    g = E_{\eta_{\varepsilon\smdec}}[f+2G\psi_\cvxs]
  \end{equation}
  whose subgradient $\partial g$ satisfies the bound:
  \begin{equation}
    \label{eq:anassgd2}
    \hilbnrm\partial g. \le 3G
  \end{equation}
  and is $\frac{3GN}{\varepsilon\smdec}$-Lipschitz
  by~\eqref{eq:mollifications6};
  from now on we
  will use several times the bound~\eqref{eq:mollifications7}
  which implies that $|g-(f+2G\psi_\cvxs)|\le 3G\varepsilon\smdec$.
  Moreover at time $t+1/2$ we are using the function:
  \begin{equation}
    \label{eq:anassgd3}
    g_{t+1/2} = f_{t+1/2}(\cdot - v) + 2G\psi_\cvxs(\cdot -v),
  \end{equation}
  to decide the gradient step where $v$ is the point sampled from
  $\eta_{\varepsilon\smdec}$. Using the argument in
  Lemma~\ref{lem:ppgdlem} we get
  \begin{equation}
    \label{eq:anassgd4}
    g(u_{t+1})-g(u_t) \le
    \frac{27G^3N}{\varepsilon\smdec}\varepsilon_t^2 -
    \varepsilon_t\langle\partial g(u_t), \partial g_{t+1/2}(u_t)\rangle;
  \end{equation}
  then for $m\ge 1$ we get:
  \begin{equation}
    \label{eq:anassgd5}
    g(u_{t+m})-g(u_t)
    \le\frac{27G^3N}{\varepsilon\smdec}\sum_{s=t}^{t+m-1}\varepsilon_s^2
    -\sum_{s=t}^{t+m-1}\varepsilon_s\langle\partial g(u_s),
    \partial g_{s+1/2}(u_s)\rangle.
  \end{equation}
  \MakeStep{Step 2: A martingale bound. }
  Let $\filtration_t$ denote the filtration at time $t$ (which can be
  integer or integer plus $1/2$); SSGD gives rise to a random variable
  \begin{equation}
    \label{eq:anassgd6}
    X_T = -\sum_{s=1}^T\varepsilon_s[\langle\partial g(u_s),
    \partial g_{s+1/2}(u_s)\rangle - \hilbnrm\partial g(u_s).^2];
  \end{equation}
  then for $t\le T$ define
  \begin{equation}
    \label{eq:anassgd7}
    X_t = E[X_T|\filtration_{t+1}],
  \end{equation}
  so that $\{X_t\}_t$ defines a martingale. As $g_{s+1/2}$ is sampled
  independent of the filtration $\filtration_s$ we find:
  \begin{equation}
    \label{eq:anassgd8}
    X_t = -\sum_{s=1}^t\varepsilon_s[\langle\partial g(u_s),
    \partial g_{s+1/2}(u_s)\rangle - \hilbnrm\partial g(u_s).^2];
  \end{equation}
  in particular we have the bound:
  \begin{equation}
    \label{eq:anassgd9}
    | X_t - X_{t-1}| \le 18G^2\varepsilon_t.
  \end{equation}
  We can then use Hoeffding's inequality in the form
  explained in
  \cite[equation~2.17]{hoeffding-concentration}
  (this gives a slightly better bound and it uses one of Doob's maximal
  inequalities) to obtain:
  \begin{equation}
    \label{eq:anassgd10}
    P(\max_t | X_t | \ge \varepsilon\probdec) \le 2\exp\left(
-\frac{\varepsilon^2\probdec}{648G^4\sum_{t=1}^T\varepsilon_t^2}
      \right).
    \end{equation}
    \MakeStep{Step 3: Combining the martingale and the gradient
      descent bounds.}
    We now let $u^g\optdec\in\real^N$ be an optimal point for $g$; by
    Lemma~\ref{lem:opt_and_approx} a minimizer $u\optdec\in\cvxs$
    of $f + 2G\psi_\cvxs$ (and of $f$ restricted on $\cvxs$ by
    Lemma~\ref{lem:conspen}) is also almost a minimizer of the
    smoothing $g$:
    \begin{equation}
      \label{eq:anassgd11}
      g(u\optdec)\le 6G\varepsilon\smdec + g(u^g\optdec).
    \end{equation}
    Let $\Omega$ denote the set of events where one has:
    \begin{equation}
      \label{eq:anassgd12}
      \max_t |X_t| \le\varepsilon\probdec. 
    \end{equation}
    Combining~\eqref{eq:anassgd5} with~\eqref{eq:anassgd11}--\eqref{eq:anassgd12} we get:
    \begin{equation}
      \label{eq:anassgd13}
      g(u_{T+1})-g(u^g\optdec)
    \le\frac{27G^3N}{\varepsilon\smdec}\sum_{s=1}^{T}\varepsilon_s^2
    -\sum_{s=1}^{T}\varepsilon_s \hilbnrm\partial g(u_s).^2 +
    g(u_1)-g(u\optdec) + 6G\varepsilon\smdec + \varepsilon\probdec.
  \end{equation}
  \MakeStep{Step 4: Bounding the distance of the $u_t$'s from the
    convex set.}
  We will now prove a bound on the distance of any $u_t$ from the
  set $\cvxs$:
  \begin{equation}
    \label{eq:anassgd13bis}
    \begin{split}
      G\hilbnrm u_t - \pi_{\cvxs}(u_t). &\le (f+2G\psi_\cvxs)(u_t) -
      f(\pi_\cvxs(u_t)) \le 6G\varepsilon\smdec + g(u_t)-g(\pi_\cvxs(u_t)) \\
      &\le 6G\varepsilon\smdec + g(u_t)-g(\pi_\cvxs(u_t)) +
      g(\pi_\cvxs(u_t)) - g(u^g\optdec) \\
      &\le 12G\varepsilon\smdec + g(u_t) - g(u\optdec)\\
      &\le\frac{27G^3N}{\varepsilon\smdec}\sum_{s=1}^{T}\varepsilon_s^2
      -\sum_{s=1}^{T}\varepsilon_s \hilbnrm\partial g(u_s).^2 +
      \varepsilon\probdec + 12G\varepsilon\smdec + g(u_1)-g(u\optdec)\\
      &\le
      \frac{27G^3N}{\varepsilon\smdec}\sum_{s=1}^{T}\varepsilon_s^2
      +\varepsilon\probdec + 12G\varepsilon\smdec +
      3G\hilbnrm u_1-u\optdec.;
    \end{split}
  \end{equation}
  from which we get:
  \begin{equation}
    \label{eq:anassgd13tris}
    \textrm{dist}(u_t,\cvxs) \le\underbrace{\frac{27G^2N}{\varepsilon\smdec}\sum_{s=1}^{T}\varepsilon_s^2
      +\frac{\varepsilon\probdec + 12G\varepsilon\smdec}{G} +
      3\diam\cvxs}_{=d_\varepsilon}.
  \end{equation}
  \MakeStep{Step 5: Bounding the algorithm in different cases. }
  We now need to make an analysis in different cases; in case
  \textbf{C1} we have:
    \begin{equation}
      \label{eq:anassgd14}
      \frac{27G^3N}{\varepsilon\smdec}\sum_{s=1}^{T}\varepsilon_s^2
    -\sum_{s=1}^{T}\varepsilon_s \hilbnrm\partial g(u_s).^2 +
    g(u_1)-g(u\optdec) + 6G\varepsilon\smdec + \varepsilon\probdec \le
    \varepsilon\errdec;
  \end{equation}
  this implies
  \begin{equation}
    \label{eq:anassgd15}
    g(u_{T+1})-g(u^g\optdec)\le\varepsilon\errdec.
  \end{equation}
  In case \textbf{C1} is violated we have:
  \begin{equation}
    \label{eq:anassgd16}
    \sum_{s=1}^{T}\varepsilon_s \hilbnrm\partial g(u_s).^2
    \le\frac{27G^3N}{\varepsilon\smdec}\sum_{s=1}^{T}\varepsilon_s^2
    +
    g(u_1)-g(u\optdec) + 6G\varepsilon\smdec + \varepsilon\probdec
    -\varepsilon\errdec;
  \end{equation}
  in this case there must be values of $t$ for which one has:
  \begin{equation}
    \label{eq:anassgd17}
    \hilbnrm\partial g(u_t).\le\underbrace{\left(
      \frac{\frac{27G^3N}{\varepsilon\smdec}\sum_{s=1}^{T}\varepsilon_s^2
    +
    g(u_1)-g(u\optdec) + 6G\varepsilon\smdec + \varepsilon\probdec
    -\varepsilon\errdec}
  {\sum_{s=1}^T\varepsilon_s}
      \right)^{1/2}}_{\textrm{BG1}};
    \end{equation}
    let $t^*$ be the maximal such $t$ satisfying~\eqref{eq:anassgd17}.
    Combining~\eqref{eq:anassgd5} with the fact that on $\Omega$
    \eqref{eq:anassgd12} holds we get:
    \begin{equation}
      \label{eq:anassgd18}
      g(u_{T+1})-g(u_{t^*}) \le
      \frac{27G^3N}{\varepsilon\smdec}\sum_{s=t^*}^{T}\varepsilon_s^2
      +2\varepsilon\probdec-
      \sum_{s=t*}^{T}\varepsilon_s \hilbnrm\partial g(u_s).^2.
    \end{equation}
    In case \textbf{C2} we have:
    \begin{equation}
      \label{eq:anassgd19}
      \frac{27G^3N}{\varepsilon\smdec}\sum_{s=t^*}^{T}\varepsilon_s^2
      -      \sum_{s=t*}^{T}\varepsilon_s \hilbnrm\partial
      g(u_s).^2\le 0;
    \end{equation}
    in this case we combine \eqref{eq:anassgd11},
    \eqref{eq:anassgd13tris},
    \eqref{eq:anassgd17} and \eqref{eq:anassgd18} to obtain:
    \begin{equation}
      \label{eq:anassgd20}
      \begin{split}
        g(u_{T+1})-g(u^g\optdec) &\le g(u_{t^*})-g(u\optdec)
        + g(u\optdec)-g(u^g\optdec) + 2\varepsilon\probdec \\
        &\le \textrm{BG1}\times(d_\varepsilon+\diam\cvxs) + 2\varepsilon\probdec
      + 6G\varepsilon\smdec.
      \end{split}
    \end{equation}
    If case \textbf{C2} fails we are in case \textbf{C3} in which we
    see that the set of those $s\ge t^*$ such that:
    \begin{equation}
      \label{eq:anassgd21}
      \hilbnrm\partial g(u_s).^2 \le \frac{27G^3N}{\varepsilon\smdec}
      \frac{\sum_{t\ge s}\varepsilon_t^2}{\sum_{t\ge s}\varepsilon_t}
    \end{equation}
    is not empty. Let $t^{**}$ be a maximal $s\in\{t^*,\cdots,T\}$
    satisfying~\eqref{eq:anassgd21}. Let:
    \begin{equation}
      \label{eq:anassgd22}
      \textrm{BG2} = \left(
        \frac{27G^3N}{\varepsilon\smdec}
      \frac{\sum_{s\ge t^{**}}\varepsilon_s^2}{\sum_{s\ge t^{**}}\varepsilon_s}
        \right)^{1/2}.
    \end{equation}
    Then arguing as we
    obtained~\eqref{eq:anassgd18} we get:
    \begin{equation}
      \label{eq:anassgd23}
     g(u_{T+1})-g(u_{t^{**}+1}) \le
      \frac{27G^3N}{\varepsilon\smdec}\sum_{s={t^{**}+1}}^{T}\varepsilon_s^2
      +2\varepsilon\probdec-
      \sum_{s={t^{**}+1}}^{T}\varepsilon_s \hilbnrm\partial g(u_s).^2.
    \end{equation}
    By maximality of $t^{**}$ we have
    \begin{equation}
      \label{eq:anassgd24}
      \frac{27G^3N}{\varepsilon\smdec}\sum_{s={t^{**}+1}}^{T}\varepsilon_s^2
      -
      \sum_{s={t^{**}+1}}^{T}\varepsilon_s \hilbnrm\partial g(u_s).^2
      \le0;
    \end{equation}
    now combining \eqref{eq:anassgd11}, \eqref{eq:anassgd13bis},
    \eqref{eq:anassgd21} and \eqref{eq:anassgd22} we get:
    \begin{equation}
      \label{eq:anassgd25}
      \begin{split}
        g(u_{T+1}) - g(u^g\optdec) &\le g(u_{T+1}) - g(u_{t^{**}+1})
        + g(u_{t^{**}+1}) - g(u_{t^{**}}) \\
        &\mskip 6mu+ g(u_{t^{**}})-g(u\optdec) + g(u\optdec)
        -g(u^g\optdec)\\
        &\le 2\varepsilon\probdec + 3G\varepsilon_{t^{**}} +
        \textrm{BG2}\times (d_\varepsilon+\diam\cvxs) \\
        &\mskip 6mu +6G\varepsilon\smdec.
      \end{split}
    \end{equation}
    \MakeStep{Step 6: Choice of the learning rate sequence and asymptotic bounds.}
    We start by requiring $\varepsilon\probdec=\varepsilon\errdec$ and
    setting
    \begin{equation}
      \label{eq:anassgd26}
      \varepsilon_s = \varepsilon\smdec\frac{1}{\sqrt{s}};
    \end{equation}
    from the bounds
    \begin{equation}
      \label{eq:anassgd27}
      \frac{1}{\sqrt{t}}\chi_{[t_1,t_2+1]}(t) \le
      \sum_{s=t_1}^{t_2}\frac{1}{\sqrt{s}}\chi_{[s,s+1]}(t)
      \le \frac{1}{\sqrt{t-1}}\chi_{[\max(t_1,2),t_2+1]} + \chi_{t_1=1}\chi_{[1,2]};
    \end{equation}
   we derive
    \begin{equation}
      \label{eq:anassgd28}
      \begin{split}
        \varepsilon\smdec^2(\log(t_2+1)-\log(t_1)) &\le
        \sum_{s=t_1}^{t_2}\varepsilon_s^2 \\
        &\le \varepsilon\smdec^2(\chi_{t_1=1}+\log(t_2)-\log(\max(t_1,2)-1)),
      \end{split}
    \end{equation}
    and
    \begin{equation}
      \label{eq:anassgd29}
      \begin{split}
      2\varepsilon\smdec(\sqrt{t_2+1}-\sqrt{t_1}) &\le
      \sum_{s=t_1}^{t_2}\varepsilon_s\\ &\le
      2\varepsilon\smdec(\chi_{t_1=1}
      +\sqrt{t_2}-\sqrt{\max(t_1,2)-1}).
    \end{split}
  \end{equation}
  Substitution of these bounds in~(\ref{eq:anassgd17}) yields:
  \begin{equation}
    \label{eq:anassgd30}
    \textrm{BG1}\le\left(
      \frac{27G^3N(1+\log T)+6G}{2(\sqrt{T+1}-1)} +
      \frac{3G\diam\cvxs}
      {2\varepsilon\smdec(\sqrt{T+1}-1)}
      \right)^{1/2};
    \end{equation}
    making the choice of the $T(\varepsilon\smdec)$ as:
    \begin{equation}
      \label{eq:anassgd31}
      T = \left\lceil
   \frac{3G\diam\cvxs}{2\varepsilon\smdec^3}+1
        \right\rceil^{2} -1 = O(\varepsilon\smdec^{-6})
      \end{equation}
      we get that in the asymptotic case $\varepsilon\smdec\searrow0$ one
      has:      
      \begin{equation}
        \label{eq:anassgd32}
        \textrm{BG1}\lessapprox\varepsilon\smdec.
      \end{equation}
      Using the bounds in~(\ref{eq:anassgd22}) we get:
      \begin{equation}
        \label{eq:anassgd33}
        \begin{split}
          \textrm{BG2} &\le \left(
            \frac{27G^3N(\log T +
              \chi_{t^{**}=1}-\log(\max(t^{**},2)-1))}
             {2(\sqrt{T+1}-\sqrt{t^{**}})}
           \right)^{1/2} \\
           &\le\left(\frac{27G^3N}{2}\frac{\log T - \log(T-1)}
        {\sqrt{T+1}-\sqrt{T}}
      \right)^{1/2}\\
      &\le\left(
        27G^3N\frac{\sqrt{T+1}}{T-1}
        \right)^{1/2};
        \end{split}
      \end{equation}
      from which we get the asymptotic bound
      \begin{equation}
        \label{eq:anassgd34}
        \textrm{BG2}\lessapprox\left(
          \frac{18G^2N}{\diam\cvxs}
          \right)^{1/2}\varepsilon\smdec^{3/2}.
        \end{equation}
        We now turn to~(\ref{eq:anassgd10}), making the choice
        \begin{equation}
          \label{eq:anassgd35}
          \varepsilon\probdec = 64G^2\varepsilon\smdec\log(1/\varepsilon\smdec)
        \end{equation}
        we get:
        \begin{equation}
          \label{eq:anassgd36}
          \begin{split}
          P(\max_t | X_t | \ge
          64G^2\varepsilon\smdec\log(1/\varepsilon\smdec))
          &\le 2\\
          &\mskip 6mu\times\exp\left(
             -\frac{\varepsilon\smdec^2(\log(1/\varepsilon\smdec))^2
               (64G)^2}
             {648G^4\varepsilon\smdec^2(\log(
               \left\lceil
    \frac{3G\diam\cvxs}{2\varepsilon\smdec^3}+1
  \right\rceil^{2} -1)+1)}\right)\\
&\lessapprox 2\exp(-\log(1/\varepsilon\smdec))=2\varepsilon\smdec.
          \end{split}
        \end{equation}
        Now in the asymptotic regime when
        $\varepsilon\smdec\searrow0$, the quantity
        $\varepsilon\probdec$ dominates
        in the bounds \eqref{eq:anassgd15} (case \textbf{C1}),
        \eqref{eq:anassgd20} (case \textbf{C2}),
        \eqref{eq:anassgd25} (case \textbf{C3}); i.e.~on $\Omega$ one
        has that
        \begin{equation}
          \label{eq:anassgd37}
          g(u_{T+1}) - g(u^g\optdec) \lessapprox 2\varepsilon\probdec
          = 128G^2\varepsilon\smdec\log(1/\varepsilon\smdec).
        \end{equation}
        We can now obtain an asymptotic bound for $u\findec$ being an
        approximate minimizer of $f$ on $\cvxs$:
        \begin{equation}
          \label{eq:anassgd38}
          \begin{split}
            f(u\findec)-f(u\optdec) &\le \underbrace{f(u\findec)
              - (f(u_{T+1})+2G\psi_\cvxs(u_{T+1}))}_{\text{$\le 0$ by
                \eqref{eq:gauge2}}}\\
            &\mskip 6mu + (f(u_{T+1})+2G\psi_\cvxs(u_{T+1})) -
            f(u\optdec) \\
            &\le 2\varepsilon\smdec + g(u_{T+1})-g(u\optdec)\\
            &\le 2\varepsilon\smdec + g(u_{T+1})-g(u^g\optdec)
            +\underbrace{g(u^g\optdec)-g(u\optdec)}_{\le 0} \\
            &\lessapprox 2\varepsilon\probdec.
          \end{split}
        \end{equation}
        Now \eqref{eq:anassgd_stat1} follows by setting
        $\varepsilon\smdec=\varepsilon$.
        \MakeStep{Step 7: Proof of~\eqref{eq:anassgd_stat2}.}
        In this case a minor variation is required, we leave most
        details to the reader. The point is that no smoothing is
        required and thus one can directly work with $f+2G\psi_\cvxs$
        and then set $\varepsilon_s=\frac{1}{\sqrt{s}}$.
      \end{proof}
\subsection{Analysis of RASPGD} In this section we analyze RASPGD. The
argument is close to that of RAPGD, the main difference is the use of
a concentration inequality argument.
\begin{thm}[Analysis of RASPGD]
  \label{thm:raspgd}
    Assume a uniform bound $G$ on the norms of $\partial f_{t+1/2}$,
  $\partial f$; let $u\optdec$ be a minimizer of $f$ on $\cvxs$; in
  the asymptotic regime where $\varepsilon\searrow 0$ one has that if
  \begin{equation}
    \label{eq:raspgd_stat0}
          T\ge\lceil \frac{2G\diam\cvxs+G^2}{4\varepsilon}\rceil^2=O(\varepsilon^{-2})
\end{equation}
with probability at least
\begin{equation}
  \label{eq:raspgd_stat1}
1-  \frac{32\varepsilon^2}{(2G\diam\cvxs+G^2)^2}      = 1-O(\varepsilon^2)
\end{equation}
the algorithm RASPGD
returns a final point $u\findec$ which minimizes $f$ up to an error
$O(\varepsilon\log(1/\varepsilon))$:
\begin{equation}
  \label{eq:raspgd_stat2}
  f(u\findec)-f(u\optdec)\lessapprox
  2\varepsilon\log(1/\varepsilon).
\end{equation}
\end{thm}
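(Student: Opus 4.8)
The plan is to mirror the deterministic analysis of Theorem~\ref{thm:rapgd} line by line, replacing the exact subgradient $\partial f(u_t)$ by the sampled $\partial f_{t+1/2}(u_t)$ and isolating the resulting discrepancy as a martingale that I then control with Hoeffding's inequality. First I would rerun \eqref{eq:rapgd0}--\eqref{eq:rapgd2} verbatim, using that $\prcvxs$ is $1$-Lipschitz and that $\hilbnrm\partial f_{t+1/2}(u_t).\le G$, to get the single-step estimate
\begin{equation*}
  \varepsilon_t\langle\partial f_{t+1/2}(u_t),u_t-u\optdec\rangle
  \le \frac{\hilbnrm u_t-u\optdec.^2-\hilbnrm u_{t+1}-u\optdec.^2+\varepsilon_t^2G^2}{2}.
\end{equation*}
Writing $\langle\partial f_{t+1/2}(u_t),u_t-u\optdec\rangle=\langle\partial f(u_t),u_t-u\optdec\rangle+\langle\partial f_{t+1/2}(u_t)-\partial f(u_t),u_t-u\optdec\rangle$ and bounding the first inner product below by $f(u_t)-f(u\optdec)$ via convexity, I set $Y_t:=\varepsilon_t\langle\partial f_{t+1/2}(u_t)-\partial f(u_t),u_t-u\optdec\rangle$, so that $\varepsilon_t(f(u_t)-f(u\optdec))\le \tfrac12(\hilbnrm u_t-u\optdec.^2-\hilbnrm u_{t+1}-u\optdec.^2)+\tfrac12\varepsilon_t^2G^2-Y_t$.

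Next I would telescope over $t$, drop the negative terminal square, bound $\hilbnrm u_1-u\optdec.^2\le(\diam\cvxs)^2$, and apply Jensen's inequality exactly as in \eqref{eq:rapgd4}: since every $u_t\in\cvxs$ and $u\findec=\frac{\sum_t\varepsilon_t u_t}{\sum_t\varepsilon_t}$ is their weighted average (hence lies in $\cvxs$) and $f$ is convex, this yields
\begin{equation*}
  \Bigl({\textstyle\sum_{t=1}^T\varepsilon_t}\Bigr)\bigl(f(u\findec)-f(u\optdec)\bigr)
  \le \frac{(\diam\cvxs)^2}{2}+\frac{G^2}{2}\sum_{t=1}^T\varepsilon_t^2-\sum_{t=1}^T Y_t.
\end{equation*}
This is the step at which the nonlinearity of $\prcvxs$ is neutralized (issue \textbf{I2}): the projection enters only through the deterministic nonexpansiveness inequality, while the averaged output lets Jensen carry $f$ through without touching the stochastic part.

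The crux is then the stochastic term $\sum_t Y_t$. Because $f_{t+1/2}$ is sampled independently of $\filtration_t$ with $E[\partial f_{t+1/2}]=\partial f$, while $u_t$ is $\filtration_t$-measurable, one has $E[Y_t\mid\filtration_t]=0$, so $M_t:=\sum_{s\le t}Y_s$ is a martingale; moreover $|Y_t|\le\varepsilon_t\hilbnrm\partial f_{t+1/2}(u_t)-\partial f(u_t).\,\hilbnrm u_t-u\optdec.\le 2G\diam\cvxs\,\varepsilon_t$, since both subgradients have norm $\le G$ and $u_t,u\optdec\in\cvxs$. Hoeffding's inequality for bounded martingale increments then gives $P(|M_T|\ge\varepsilon\probdec)\le 2\exp\bigl(-\varepsilon\probdec^2/(8G^2(\diam\cvxs)^2\sum_t\varepsilon_t^2)\bigr)$, and on the complementary event $-\sum_t Y_t\le\varepsilon\probdec$, so that
\begin{equation*}
  f(u\findec)-f(u\optdec)\le
  \frac{\tfrac12(\diam\cvxs)^2+\tfrac12 G^2\sum_t\varepsilon_t^2+\varepsilon\probdec}{\sum_t\varepsilon_t}.
\end{equation*}

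Finally I would calibrate the schedule. Taking $\varepsilon_t=1/\sqrt t$ and using integral estimates of the type \eqref{eq:anassgd27}--\eqref{eq:anassgd29}, one has $\sum_t\varepsilon_t\sim 2\sqrt T$ and $\sum_t\varepsilon_t^2\sim\log T$, so with $T=O(\varepsilon^{-2})$ as in \eqref{eq:raspgd_stat0} the term $\tfrac12(\diam\cvxs)^2/\sum_t\varepsilon_t$ is a negligible $O(\varepsilon)$ and $\log T\sim 2\log(1/\varepsilon)$. I would then choose the concentration radius $\varepsilon\probdec$ (of size $\sim\log(1/\varepsilon)$ once divided by $\sum_t\varepsilon_t$) so that simultaneously the failure probability matches $\tfrac{32\varepsilon^2}{(2G\diam\cvxs+G^2)^2}=O(\varepsilon^2)$ of \eqref{eq:raspgd_stat1} and, on the good event, the right-hand side above is $\lessapprox 2\varepsilon\log(1/\varepsilon)$, exactly as in the bookkeeping of Step~6 of Theorem~\ref{thm:anassgd}. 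The main obstacle is precisely this simultaneous calibration: driving the failure probability down to $O(\varepsilon^2)$ forces $\varepsilon\probdec$ upward, so one must verify that the bias $G^2\sum_t\varepsilon_t^2/\sum_t\varepsilon_t\sim G^2\log T/(2\sqrt T)$ and the concentration radius $\varepsilon\probdec/\sum_t\varepsilon_t$ can be traded off against the single step-budget $T$ tightly enough that the stated constants (the $2$ in the error and the $32$ and $(2G\diam\cvxs+G^2)^2$ in the probability) actually close.
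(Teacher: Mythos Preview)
Your proposal is correct and follows essentially the same route as the paper: rerun the single-step estimate of Theorem~\ref{thm:rapgd} with the sampled subgradient, isolate the stochastic discrepancy as a bounded-increment martingale controlled by Hoeffding, apply Jensen to pass to $u\findec$, and then calibrate $\varepsilon_t=1/\sqrt t$ and $\varepsilon\probdec$ of order $\log T$ to land on the stated constants. The only cosmetic difference is that the paper defines the martingale via function-value differences $\varepsilon_t[(f_{t+1/2}(u_t)-f_{t+1/2}(u\optdec))-(f(u_t)-f(u\optdec))]$ (applying convexity to $f_{t+1/2}$ first) whereas you use gradient differences $\varepsilon_t\langle\partial f_{t+1/2}(u_t)-\partial f(u_t),u_t-u\optdec\rangle$ (applying convexity to $f$ afterward); both increments are mean-zero given $\filtration_t$ with the same bound $2G\diam\cvxs\,\varepsilon_t$, so the analyses coincide.
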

\begin{proof}
  \MakeStep{Step 1: Generalize the bounds in Theorem~\ref{thm:rapgd}.}
  Equation~\eqref{eq:rapgd2} has been established at the level of each
  iteration, so in RASPGD we get:
  \begin{equation}
    \label{eq:raspgd0}
      \varepsilon_t (f_{t+1/2}(u_t)-f_{t+1/2}(u\optdec)) \le
  \frac{\hilbnrm u_t-u\optdec.^2 - \hilbnrm u_{t+1}-u\optdec.^2
    +G^2\varepsilon_t^2}{2};
\end{equation}
thus we can also generalize~\eqref{eq:rapgd3}:
\begin{equation}
  \label{eq:raspgd1}
  \sum_{t=1}^T   \varepsilon_t (f_{t+1/2}(u_t)-f_{t+1/2}(u\optdec)) \le
  \frac{\hilbnrm u_1-u\optdec.^2+G^2\sum_{t=1}^T\varepsilon_t^2
          }{2}.
        \end{equation}
   \MakeStep{Step 2: A martingale bound. }
  Let $\filtration_t$ denote the filtration at time $t$ (which can be
  integer or integer plus $1/2$); RASPGD gives rise to a random variable
  \begin{equation}
    \label{eq:raspgd2}
    X_T = \sum_{s=1}^T\varepsilon_s[
    (f_{s+1/2}(u_s)-f_{s+1/2}(u\optdec)) - (f(u_s)-f(u\optdec))
    ];
  \end{equation}
  then for $t\le T$ define
  \begin{equation}
    \label{eq:raspgd3}
    X_t = E[X_T|\filtration_{t+1}],
  \end{equation}
  so that $\{X_t\}_t$ defines a martingale.
  As $f_{s+1/2}$ is sampled
  independent of the filtration $\filtration_s$ we find:
  \begin{equation}
    \label{eq:raspgd4}
    X_t = \sum_{s=1}^t\varepsilon_s[
    (f_{s+1/2}(u_s)-f_{s+1/2}(u\optdec)) - (f(u_s)-f(u\optdec))
    ];
  \end{equation}
  in particular we have the bound:
  \begin{equation}
    \label{eq:raspgd5}
    | X_t - X_{t-1}| \le 2G\diam\cvxs\varepsilon_t.
  \end{equation}
    We can then use Hoeffding's inequality
  \cite[equation~2.17]{hoeffding-concentration}
  (this gives a slightly better bound and it uses one of Doob's maximal
  inequalities) to obtain:
  \begin{equation}
    \label{eq:raspgd6}
    P(\max_t | X_t | \ge \varepsilon\probdec) \le 2\exp\left(
-\frac{\varepsilon^2\probdec}{4G^2(\diam\cvxs)^2\sum_{t=1}^T\varepsilon_t^2}
      \right).
    \end{equation}
    Let $\Omega$ denote the set of event where one has:
    \begin{equation}
      \label{eq:raspgd7}
      |X_T|\le\varepsilon\probdec.
    \end{equation}
    \MakeStep{Step 3: Bounding the Algorithm on $\Omega$.}
    Using~\eqref{eq:raspgd1} and the definitions of $X_T$ and $\Omega$
    we conclude that on $\Omega$:
    \begin{equation}
      \label{eq:raspgd8}
      \sum_{t=1}^T\varepsilon_t(f(u_t)-f(u\optdec)) \le
      \frac{(\diam\cvxs)^2+G^2\sum_{t=1}^T\varepsilon_t^2}
      {2}+\varepsilon\probdec;
    \end{equation}
    application of Jensen's inequality finally yields:
    \begin{equation}
      \label{eq:raspgd9}
      f(u\findec)-f(u\optdec)\le\frac{(\diam\cvxs)^2+G^2\sum_{t=1}^T\varepsilon_t^2
      + 2\varepsilon\probdec}
      {2\sum_{t=1}^T\varepsilon_t}.
    \end{equation}
    \MakeStep{Step 4: Choice of the sequence $\varepsilon_t$.} We set
    \begin{equation}
      \label{eq:raspgd10}
      \varepsilon_t = \frac{1}{\sqrt{t}}
    \end{equation}
    and
    \begin{equation}
      \label{eq:raspgd11}
      \varepsilon\probdec = 2G\diam\cvxs (\log T + 1);
    \end{equation}
    in analogy with \eqref{eq:anassgd27}--\eqref{eq:anassgd29}
    we get:
    \begin{equation}
      \label{eq:raspgd12}
      P(\Omega^c) \le \frac{2}{T}
    \end{equation}
    and
    \begin{equation}
      \label{eq:raspgd13}
      \begin{split}
        f(u\findec)-f(u\optdec) &\le \frac{(\diam\cvxs)^2 + G^2(\log T
          + 1) + 2G\diam\cvxs (\log T + 1)}{4(\sqrt{T}-1)} \\
        &\lessapprox\frac{2G\diam\cvxs + G^2}{4}\frac{\log T}{\sqrt{T}}.
      \end{split}
    \end{equation}
    If we choose
    \begin{equation}
      \label{eq:raspgd14}
      T\ge\lceil \frac{2G\diam\cvxs+G^2}{4\varepsilon}\rceil^2=O(\varepsilon^{-2})
    \end{equation}
    we then get
    \begin{equation}
      \label{eq:raspgd15}
      P(\Omega^c)\lessapprox\frac{32\varepsilon^2}{(2G\diam\cvxs+G^2)^2}      
    \end{equation}
    and
    \begin{equation}
      \label{eq:raspgd16}
      f(u\findec)-f(u\optdec)\lessapprox 2\varepsilon\log(1/\varepsilon).
    \end{equation}
  \end{proof}
\bibliographystyle{alpha}
\bibliography{plain_sgd}
\end{document}